\newcommand{\calL}{\mathcal{L}}
\newcommand{\Tab}{Table\xspace}
\newtheorem{proposition}{Proposition}
\begin{document}
	%
	% paper title
	% Titles are generally capitalized except for words such as a, an, and, as,
	% at, but, by, for, in, nor, of, on, or, the, to and up, which are usually
	% not capitalized unless they are the first or last word of the title.
	% Linebreaks \\ can be used within to get better formatting as desired.
	% Do not put math or special symbols in the title.
	\title{Enhancing Geometric Factors in Model Learning and Inference for Object Detection and Instance Segmentation}
	%
	%
	% author names and IEEE memberships
	% note positions of commas and nonbreaking spaces ( ~ ) LaTeX will not break
	% a structure at a ~ so this keeps an author's name from being broken across
	% two lines.
	% use \thanks{} to gain access to the first footnote area
	% a separate \thanks must be used for each paragraph as LaTeX2e's \thanks
	% was not built to handle multiple paragraphs
	%
	
	\author{Zhaohui~Zheng,
		Ping~Wang,
		Dongwei~Ren,
		Wei~Liu,
		Rongguang~Ye,
		Qinghua~Hu,
		Wangmeng~Zuo% <-this % stops a space
		\thanks{This work was supported by 
%			National Key Research and Development Project under Grant 2019YFB2101901,
			 National Natural Science Foundation of China under Grants (Nos. 61801326 and U19A2073).}
		
		\thanks{Z. Zheng, P. Wang and R. Ye are with the School of Mathematics, Tianjin University, Tianjin, 300350, China.
			(Email: zh_zheng@tju.edu.cn, wang_ping@tju.edu.cn, ementon@tju.edu.cn)}% <-this % stops a space
		\thanks{D. Ren and W. Zuo are with the School of Computer Science and Technology, Harbin Institute of Technology, Harbin, 150001, China. (Email: rendongweihit@gmail.com, cswmzuo@gmail.com)}
		\thanks{W. Liu and Q. Hu are with the Tianjin Key Laboratory of Machine Learning, College of Intelligence and Computing, Tianjin University, Tianjin, 300350, China. (Email: lewiswestbrook95@gmail.com, huqinghua@tju.edu.cn)}
		\thanks{Corresponding author: Dongwei Ren}	
}%
	% \thanks{J. Doe and J. Doe are with Anonymous University.}% <-this % stops a space
	%\thanks{Manuscript received April 19, 2005; revised August 26, 2015.}}
	
	% note the % following the last \IEEEmembership and also \thanks -
	% these prevent an unwanted space from occurring between the last author name
	% and the end of the author line. i.e., if you had this:
	%
	% \author{....lastname \thanks{...} \thanks{...} }
	%                     ^------------^------------^----Do not want these spaces!
	%
	% a space would be appended to the last name and could cause every name on that
	% line to be shifted left slightly. This is one of those "LaTeX things". For
	% instance, "\textbf{A} \textbf{B}" will typeset as "A B" not "AB". To get
	% "AB" then you have to do: "\textbf{A}\textbf{B}"
	% \thanks is no different in this regard, so shield the last } of each \thanks
	% that ends a line with a % and do not let a space in before the next \thanks.
	% Spaces after \IEEEmembership other than the last one are OK (and needed) as
	% you are supposed to have spaces between the names. For what it is worth,
	% this is a minor point as most people would not even notice if the said evil
	% space somehow managed to creep in.

	% The paper headers
	\markboth{IEEE Transactions on Cybernetics,~Vol.~xx, No.~xx, Month Year}%
	{Shell \MakeLowercase{\textit{et al.}}: Bare Demo of IEEEtran.cls for IEEE Journals}
	% The only time the second header will appear is for the odd numbered pages
	% after the title page when using the twoside option.
	%
	% *** Note that you probably will NOT want to include the author's ***
	% *** name in the headers of peer review papers.                   ***
	% You can use \ifCLASSOPTIONpeerreview for conditional compilation here if
	% you desire.

	% If you want to put a publisher's ID mark on the page you can do it like
	% this:
	%\IEEEpubid{0000--0000/00\$00.00~\copyright~2015 IEEE}
	% Remember, if you use this you must call \IEEEpubidadjcol in the second
	% column for its text to clear the IEEEpubid mark.

	% use for special paper notices
	%\IEEEspecialpapernotice{(Invited Paper)}

	% make the title area
	\maketitle
	
	\begin{abstract}
	Deep learning-based object detection and instance segmentation have achieved unprecedented progress.
	In this paper, we propose Complete-IoU (CIoU) loss and Cluster-NMS for enhancing geometric factors in both bounding box regression and Non-Maximum Suppression (NMS), leading to notable gains of average precision (AP) and average recall (AR), without the sacrifice of inference efficiency.
	In particular, we consider three geometric factors, i.e., overlap area, normalized central point distance and aspect ratio, which are crucial for measuring bounding box regression in object detection and instance segmentation.
	The three geometric factors are then incorporated into CIoU loss for better distinguishing difficult regression cases.
	The training of deep models using CIoU loss results in consistent AP and AR improvements in comparison to widely adopted $\ell_n$-norm loss and IoU-based loss.
	Furthermore, we propose Cluster-NMS, where NMS during inference is done by implicitly clustering detected boxes and usually requires less iterations.
	Cluster-NMS is very efficient due to its pure GPU implementation, and geometric factors can be incorporated to improve both AP and AR.
	In the experiments, CIoU loss and Cluster-NMS have been applied to state-of-the-art instance segmentation (e.g., YOLACT and BlendMask-RT), and object detection (e.g., YOLO v3, SSD and Faster R-CNN) models.
	Taking YOLACT on MS COCO as an example, our method achieves performance gains as +1.7 AP and +6.2 AR$_{100}$ for object detection, and +1.1 AP and +3.5 AR$_{100}$ for instance segmentation, with 27.1 FPS on one NVIDIA GTX 1080Ti GPU.
	All the source code and trained models are available at \url{https://github.com/Zzh-tju/CIoU}.
\end{abstract}

% Note that keywords are not normally used for peerreview papers.
\begin{IEEEkeywords}
	Instance segmentation, object detection, bounding box regression, non-maximum suppression.
\end{IEEEkeywords}

% For peer review papers, you can put extra information on the cover
% page as needed:
% \ifCLASSOPTIONpeerreview
% \begin{center} \bfseries EDICS Category: 3-BBND \end{center}
% \fi
%
% For peerreview papers, this IEEEtran command inserts a page break and
% creates the second title. It will be ignored for other modes.
\IEEEpeerreviewmaketitle

\section{Introduction}

% The very first letter is a 2 line initial drop letter followed
% by the rest of the first word in caps.
%
% form to use if the first word consists of a single letter:
% \IEEEPARstart{A}{demo} file is ....
%
% form to use if you need the single drop letter followed by
% normal text (unknown if ever used by the IEEE):
% \IEEEPARstart{A}{}demo file is ....
%
% Some journals put the first two words in caps:
% \IEEEPARstart{T}{his demo} file is ....
%
% Here we have the typical use of a "T" for an initial drop letter
% and "HIS" in caps to complete the first word.

\IEEEPARstart{O}{bject} detection and instance segmentation have received overwhelming research attention due to their practical applications in video surveillance, visual tracking, face detection and inverse synthetic aperture radar detection \cite{VideoSurveillance,SiamRCNN,human,face,ISAR,Survey}.
Since Deformable Part Model \cite{DPM}, bounding box regression has been widely adopted for localization in object detection.
Driven by the success of deep learning, prosperous deep models based on bounding box regression have been studied, including one-stage \cite{yolov2,yolov3,SSD,DSSD,focalloss,FCOS}, two-stage \cite{fastrcnn,fasterrcnn}, and multi-stage detectors \cite{iterativebboxreg,cascadercnn}.
Instance segmentation is a more challenging task \cite{TensorMask,polarmask,yolact}, where instance mask is further required for accurate segmentation of individuals.
Recent state-of-the-art methods suggest to add an instance mask branch to existing object detection models, e.g., Mask R-CNN \cite{maskrcnn} based on Faster R-CNN \cite{fasterrcnn} and YOLACT \cite{yolact} based on RetinaNet \cite{focalloss}.
In object detection and instance segmentation, dense boxes are usually regressed \cite{SSD,fasterrcnn,maskrcnn,yolact}.
As shown in Fig. \ref{fig:Diversity}, existing loss functions are limited in distinguishing difficult regression cases during training, and it takes tremendous cost to suppress redundant boxes during inference.
In this paper, we suggest to handle this issue by enhancing geometric factors of bounding box regression into the learning and inference of deep models for object detection and instance segmentation.

\begin{figure}[!t]
	\centering
	\includegraphics[width=0.75\columnwidth]{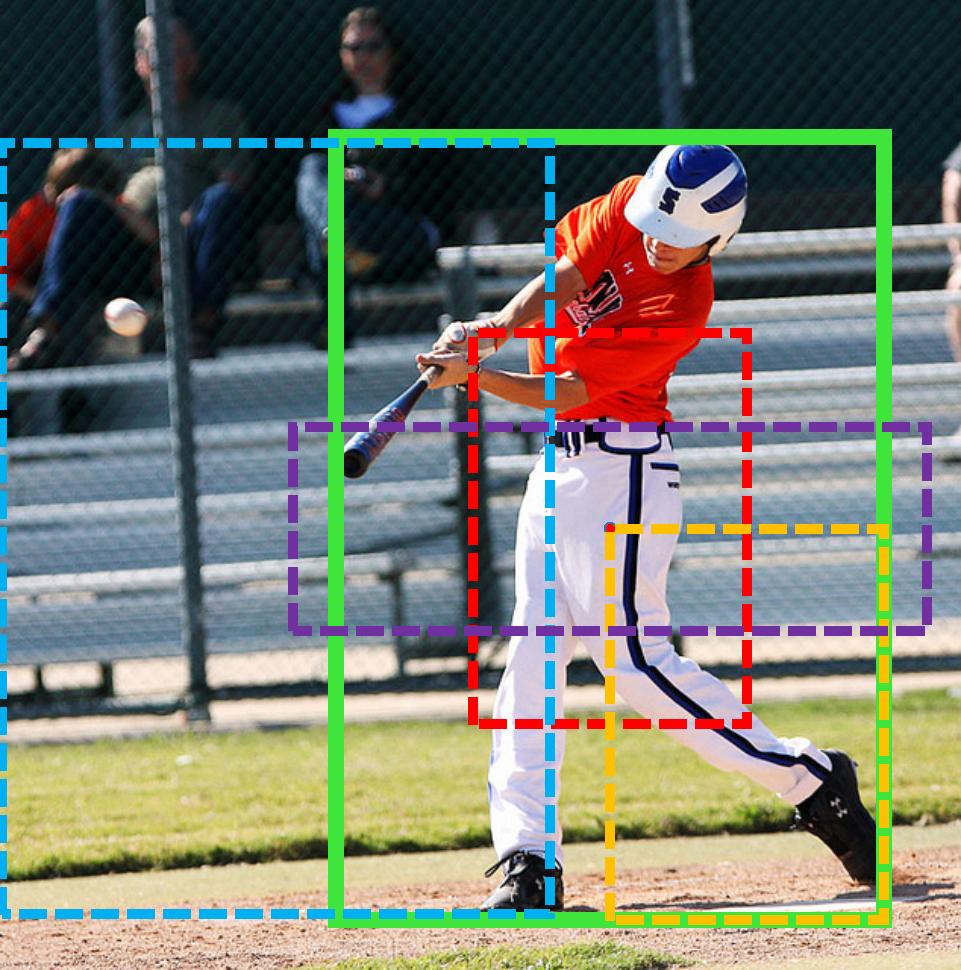}
	\caption{Diversity of bounding box regression, where green box is the ground-truth box.
		First, albeit different ways of overlaps, these regression cases have the same $\ell_1$ loss and IoU loss.
		We propose CIoU loss by considering three geometric factors to distinguish them.
		Second, albeit NMS is a simple post-processing step, it is the bottleneck for suppressing redundant boxes in terms of both accuracy and inference efficiency.
		We then propose Cluster-NMS incorporating with geometric factors for improving AP and AR while maintaining high inference efficiency.
	}
	\label{fig:Diversity}
\end{figure}

In training phase, a bounding box $\mathcal{B} = [x,y,w,h]^{\text{T}}$ is forced to approach its ground-truth box $\mathcal{B}^{gt} = [x^{gt},y^{gt},w^{gt},h^{gt}]^{\text{T}}$ by minimizing loss function $\mathcal{L}$,
\begin{equation}
\underset{\Theta}{\min}\sum_{\mathcal{B}^{gt}\in \mathbb{B}^{gt}}\mathcal{L}(\mathcal{B}, \mathcal{B}^{gt}|\Theta),
\end{equation}
where $\mathbb{B}^{gt}$ is the set of ground-truth boxes, and $\Theta$ is the parameter of deep model for regression.
A typical form of $\mathcal{L}$ is $\ell_n$-norm, e.g., Mean-Square Error (MSE) loss and Smooth-$\ell_1$ loss \cite{huber1964robust}, which have been widely adopted in object detection \cite{RDAD,Fisher}, pedestrian detection \cite{SDSRCNN,SSA-CNN}, text detection \cite{RRPN,RRD,qin2019towards}, 3D detection \cite{voxelnet,pointrcnn}, pose estimation \cite{sun2019deep,iskakov2019learnable}, and instance segmentation \cite{TensorMask,yolact}.
However, recent works suggest that $\ell_n$-norm based loss functions are not consistent with the evaluation metric, i.e., Interaction over Union (IoU), and instead propose IoU-based loss functions \cite{unitbox,boundediouloss,giou}.
For training state-of-the-art object detection models, e.g., YOLO v3 and Faster R-CNN, Generalized IoU (GIoU) loss achieves better precision than $\ell_n$-norm based losses.
However, GIoU loss only tries to maximize overlap area of two boxes, and still performs limited due to only considering overlap areas (refer to the simulation experiments in {Sec. \ref{sec:simulation}}).
As shown in Fig. \ref{fig:regression steps}, GIoU loss tends to increase the size of predicted box, while the predicted box moves towards the target box very slowly.
Consequently, GIoU loss empirically needs more iterations to converge, especially for bounding boxes at horizontal and vertical orientations (see Fig. \ref{fig:finalerror}).

In testing phase, the inference of deep model is often efficient to predict dense boxes, which are left to Non-Maximum Suppression (NMS) for suppressing redundant boxes.
NMS is an essential post-processing step in many detectors \cite{rcnn,SSD,yolov1,focalloss,yolov2,yolov3,yolact,fasterrcnn,fastrcnn}.
In original NMS, a box is suppressed only if it has overlap exceeding a threshold with the box having the highest classification score, which is likely to be not friendly to occlusion cases.
Other NMS improvements, e.g., Soft-NMS \cite{softnms} and Weighted-NMS \cite{CAD}, can contribute to better detection precision.
However, these improved NMS methods are time-consuming, severely limiting their real-time inference.
Some accelerated NMS methods \cite{oro2016work,yolact} have been developed for real-time inference, e.g., Fast NMS \cite{yolact}.
Unfortunately, Fast NMS yields performance drop due to that many boxes are likely to be over-suppressed.

In this paper, we propose to enhance geometric factors in both training and testing phases, where Complete-IoU (CIoU) loss aims to better distinguish difficult regression cases and Cluster-NMS can improve AP and AR without the sacrifice of inference time.
As for CIoU loss, three geometric factors, i.e., overlap area, normalized central point distance and aspect ratio, are formulated as invariant to regression scale.
Benefiting from complete geometric factors, CIoU loss can be deployed to improve average precision (AP) and average recall (AR) when training deep models in object detection and instance segmentation.
From Fig. \ref{fig:regression steps}, CIoU loss converges much faster than GIoU loss, and the incorporation of geometric factors leads to much better match of two boxes. 	

%	\begin{figure}[!tb]
%		\footnotesize
%		\centering
%		\begin{tabular}{cccccc}
%			\includegraphics[width=0.16\columnwidth]{example1}&
%			\includegraphics[width=0.16\columnwidth]{example2}&
%			\includegraphics[width=0.16\columnwidth]{example3}\\
%			$\mathcal{L}_{IoU}=0.75$ & $\mathcal{L}_{IoU}=0.75$ & $\mathcal{L}_{IoU}=0.75$\\
%			$\mathcal{L}_{GIoU}=0.75$ & $\mathcal{L}_{GIoU}=0.75$ & $\mathcal{L}_{GIoU}=0.75$\\
%			$\mathcal{L}_{DIoU}=0.81$ & $\mathcal{L}_{DIoU}=0.77$ & $\mathcal{L}_{DIoU}=0.75$\\
%		\end{tabular}
%		\caption{GIoU loss degrades to IoU loss for these cases, while our DIoU loss is still distinguishable. Green and red denote target box and predicted box respectively.}
%		\label{fig:loss example}
%	\end{figure}

%\textcolor{blue}{Another widely acquiesced by the community is that the use of IoU loss is partly because it is an evaluation metric of detection performance. And so naively claimed that the best loss function is to use the evaluation metric itself. However, theoretically, this statement is groundless, and it is entirely due to the performance improvement after replacing $l_n$-norm loss with IoU loss. So, what exactly is working to improve the bounding box regression? The analysis of this phenomenon is still blank.}
\begin{figure}[!t]
	\centering
	\includegraphics[width=0.8\columnwidth]{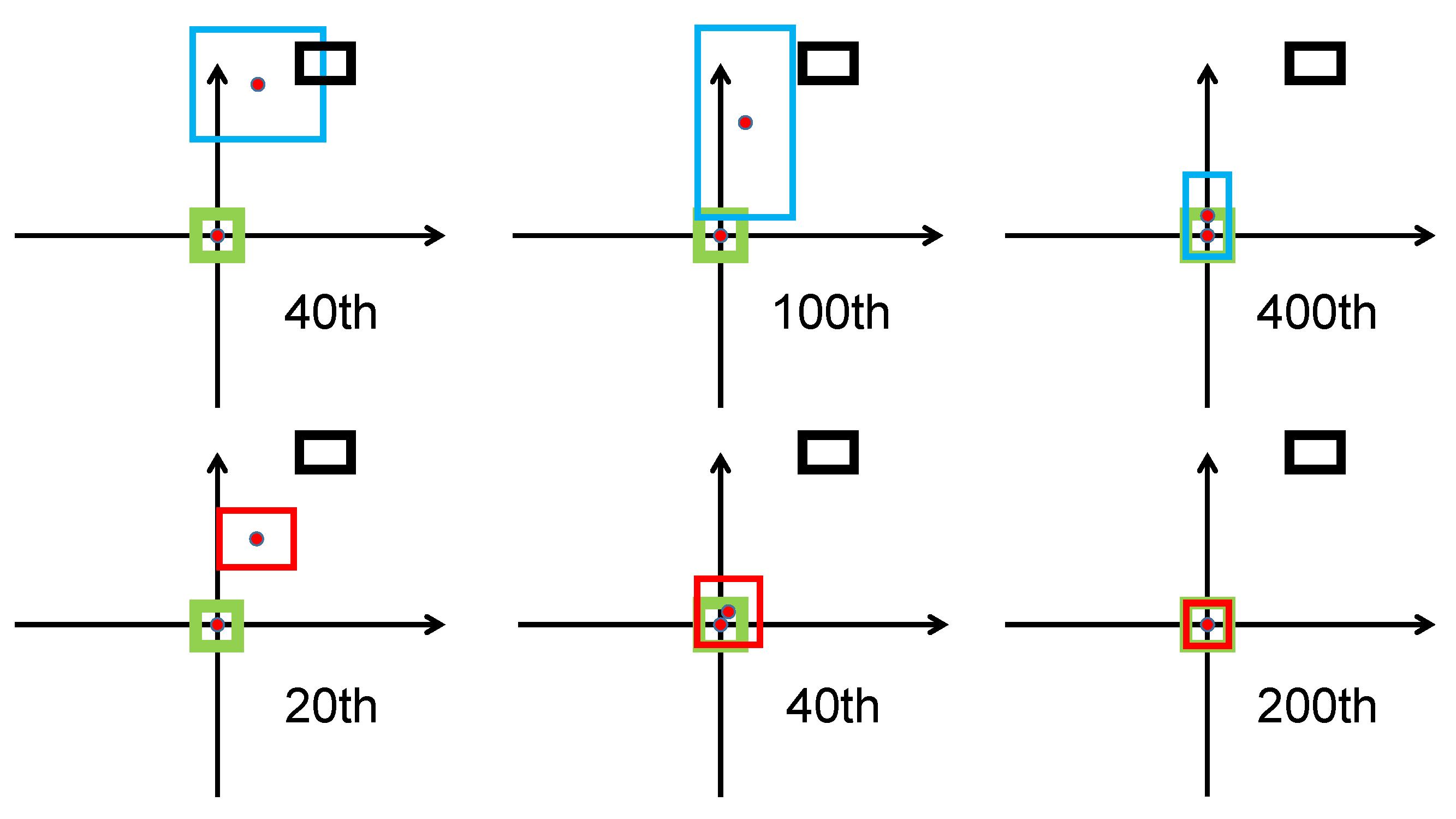}
	\caption{Updating of predicted boxes after different iterations optimized by GIoU loss (first row) and CIoU loss (second row).
		Green and black denote target box and anchor box, respectively.
		Blue and red denote predicted boxes for GIoU loss and CIoU loss, respectively.
		GIoU loss only considers overlap area, and tends to increase the GIoU by enlarging the size of predicted box.
		Benefiting from all the three geometric factors, the minimization of normalized central point distance in CIoU loss gives rise to fast convergence and the consistency of overlap area and aspect ratio contributes to better match of two boxes. }
	\label{fig:regression steps}
\end{figure}

%
%More important, the NMS improvements cannot be incorporated into Fast NMS.

%NMS, also called Original NMS, a necessary component of many detection algorithms, has been proposed for nearly half a century. Original NMS is a greedy algorithm which always choose the detection box with maximum score as output in every iteration. Let $d\in\{0,1\}$ be a binary label to indicate the suppression result of a box, where $0$ denotes to suppress and $1$ denotes to keep. For an initial set of detection box $B$ with the same class, when it selects the maximum score box, labeling it as $1$, and calculating its IoU with other boxes. The detection boxes whose IoU is greater than a given threshold $thresh$ will be suppressed, \ie, labeling as $0$. Then continue to select the maximum score box in the remaining boxes to start the next cycle. We in this work suggest that the central points distance of two bounding box is also an important factor that should be considered in NMS too.

% Another important issue of Original NMS is that its implemental speed is not very considerable. That is owing to the precess of Original NMS is sequential, and the computational complexity is $O(n^2)$. Python, Cython or CUDA implementation all suffer from it and the code implementation is also cumbersome and difficult to popularize. Although a Cython or CUDA NMS is fast enough for a model with speed $<10$ FPS, if people require a real-time model ($>25$ FPS), the disadvantages of Original NMS will be magnified.

We further propose Cluster-NMS, by which NMS can be done by implicitly clustering detected boxes and geometric factors can be easily incorporated, while maintaining high inference efficiency.
First, in Cluster-NMS, redundant detected boxes can be suppressed by grouping them implicitly into clusters.
Cluster-NMS usually requires less iterations, and its suppression operations can be purely implemented on GPU, benefiting from parallel acceleration.
Cluster-NMS can guarantee exactly the same result with original NMS, while it is very efficient.
Then, geometric factors, such as overlap-based score penalty, overlap-based weighted coordinates and normalized central point distance, can be easily assembled into Cluster-NMS.
Benefiting from geometric factors, Cluster-NMS achieves significant gains in both AP and AR, while maintaining high inference efficiency.

In the experiments, CIoU loss and Cluster-NMS have been applied to  several state-of-the-art instance segmentation (e.g., YOLACT \cite{yolact} and BlendMask-RT \cite{blendmask}) and object detection (e.g., YOLO v3 \cite{yolov3}, SSD \cite{SSD} and Faster R-CNN \cite{fasterrcnn}) models.
Experimental results demonstrate that CIoU loss can lead to consistent gains in AP and AR against $\ell_n$-norm based and IoU-based losses for object detection and instance segmentation.
Cluster-NMS contributes to notable gains in AP and AR, and guarantees real-time inference.

%This paper is a substantial extension of our pioneer work \cite{diou}.
%In this work, CIoU loss is better analyzed and is further applied to instance segmentation. The newly proposed Cluster-NMS achieves much better performance and can guarantee real-time efficiency for object detection and instance segmentation.
%

This paper is a substantial extension of our pioneer work \cite{diou}, comparing with which we have three main changes.
First, the new CIoU loss in this work is a hybrid version of DIoU and CIoU losses in \cite{diou}, and is given more analysis.
Second, a novel Cluster-NMS is proposed to accommodate kinds of NMS methods with high inference efficiency, and DIoU-NMS \cite{diou} can also be easily incorporated to boost their performance.
Third, besides object detection, CIoU loss and Cluster-NMS are further applied to state-of-the-art instance segmentation models, e.g., YOLACT \cite{yolact} and BlendMask-RT \cite{blendmask}.
We summarize the contributions from three aspects:
\begin{itemize}	
	\item A Complete IoU loss, i.e., CIoU loss, is proposed by taking three geometric factors, i.e., overlap area, normalized central point distance and aspect ratio, into account, and results in consistent performance gains for training deep models of bounding box regression.
	
	\item We propose Cluster-NMS, in which geometric factors can be further exploited for improving AP and AR while maintaining high inference efficiency.
	
	\item CIoU loss and Cluster-NMS have been applied to state-of-the-art instance segmentation (e.g., YOLACT and BlendMask-RT) and object detection (e.g., YOLO v3, SSD and Faster R-CNN) models.
	Experimental results validate the effectiveness and efficiency of our methods.
\end{itemize}

The remainder is organized as follows: Sec. \ref{sec:related} briefly reviews related works, Sec. \ref{sec:ciou} proposes CIoU loss by taking complete geometric factors into account, Sec. \ref{sec:Cluster-NMS} presents Cluster-NMS along with its variants by incorporating geometric factors, Sec. \ref{sec:experiment} gives experimental results and Sec. \ref{sec:conclusion} ends this paper with concluding remarks.

\section{Related Work}\label{sec:related}
\subsection{Object Detection and Instance Segmentation}
For a long time bounding box regression has been adopted as an essential component in many representative object detection frameworks \cite{DPM}.
In deep models for object detection, R-CNN series \cite{fasterrcnn,maskrcnn,cascadercnn} adopt two or three bounding box regression modules to obtain higher location accuracy, while YOLO series \cite{yolov1,yolov2,yolov3} and SSD series \cite{SSD,DSSD,STDN} adopt one for faster inference speed.
Recently, in RepPoints \cite{RepPoints}, a rectangular box is formed by predicting several points.
FCOS \cite{FCOS} locates an object by predicting the distances from the sampling points to the top, bottom, left and right sides of the ground-truth box.
PolarMask \cite{polarmask} predicts the length of $n$ rays from the sampling point to the edge of the object in $n$ directions to segment an instance.
There are other detectors such as RRPN \cite{RRPN} and R$^2$CNN \cite{r2cnn} adding rotation angle regression to detect arbitrary orientated objects for remote sensing detection and scene text detection.
%
%	To sum up, the regression based methods have been widely used in high level computer vision tasks, and it will still be the mainstream method in the future.
%
For instance segmentation, Mask R-CNN \cite{maskrcnn} adds an extra instance mask branch on Faster R-CNN, while the recent state-of-the-art YOLACT \cite{yolact} does the same thing on RetinaNet \cite{focalloss}.
%
%	Another method, \eg, \cite{InstanceCut,DiscriminativeLoss} ect., are driven by semantic segmentation.
%
%	However, they usually need expensive multiple modules combination or complex post-processing like pixel clustering which greatly slow down their speed.
%
%	On the contrary, detection-based method is quite efficient because of multi task learning, especially YOLACT which can reach $>$30 FPS.
%
%	To this end, the box detection result will have a great impact on instance segmentation.
To sum up, bounding box regression is one key component of state-of-the-art deep models for object detection and instance segmentation.

\subsection{Loss Function for Bounding Box Regression}
Albeit the architectures of deep models have been well studied, loss function for bounding box regression also plays a critical role in object detection.
While $\ell_n$-norm loss functions are usually adopted in bounding box regression, they are sensitive to varying scales.
In YOLO v1 \cite{yolov1}, square roots for $w$ and $h$ are adopted to mitigate this effect, while YOLO v3 \cite{yolov3} uses $2-wh$.
In Fast R-CNN, Huber loss is adopted to obtain more robust training.
Meyer \cite{meyer2019alternative} suggested to connect Huber loss with the KL divergence of Laplace distributions, and further proposed a new loss function to eliminate the transition points between $\ell_1$-norm and $\ell_2$-norm in the Huber loss.
%
%	Softer-NMS \cite{softernms} propose to use KL loss to capture the ambiguities in a dataset so as to combine with NMS.
%
Libra R-CNN \cite{librarcnn} studies the imbalance issues and proposes Balanced-$\ell_1$ loss.
In GHM \cite{GHM}, the authors proposed a gradient harmonizing mechanism for bounding box regression loss that rectifies the gradient contributions of samples.
IoU loss is also used since Unitbox \cite{unitbox}, which is invariant to the scale.
To ameliorate the training stability, Bounded-IoU loss \cite{boundediouloss} introduces the upper bound of IoU.
GIoU \cite{giou} loss is proposed to tackle the issues of gradient vanishing for non-overlapping cases, but still suffers from the problems of slow convergence and inaccurate regression.
Nonetheless, geometric factors of bounding box regression are actually not fully exploited in existing loss functions. Therefore, we propose CIoU loss by taking three geometric factors into account for better training deep models of object detection and instance segmentation.

\subsection{Non-Maximum Suppression}
NMS is a simple post-processing step in the pipelines of object detection and instance segmentation, but it is the key bottleneck for detection accuracy and inference efficiency.
As for improving detection accuracy, Soft-NMS \cite{softnms} penalizes the detection score of neighbors by a continuous function w.r.t. IoU, yielding softer and more robust suppression than original NMS.
IoU-Net \cite{iounet} introduces a new network branch to predict the localization confidence to guide NMS.
Weighted-NMS \cite{CAD} outputs weighted combination of the cluster based on their scores and IoU.
Recently, Adaptive NMS \cite{adaptivenms} and Softer-NMS \cite{softernms} are proposed to respectively study proper threshold and weighted average strategies.
As for improving inference efficiency, boolean matrix \cite{oro2016work} is adopted to represent IoU relationship of detected boxes, for facilitating GPU acceleration.
A CUDA implementation of original NMS by Faster R-CNN \cite{fasterrcnn} uses logic operations to check the boolean matrix line by line.
Recently, Fast NMS \cite{yolact} is proposed to improve inference efficiency, but it inevitably brings a drop of performance due to the over-suppression of boxes.
In this work, we propose efficient Cluster-NMS, and geometric factors can be readily exploited to obtain significant improvements in both precision and recall.

\section{Complete-IoU Loss} \label{sec:ciou}

For training deep models in object detection, IoU-based losses are suggested to be more consistent with IoU metric than $\ell_n$-norm losses \cite{unitbox,giou,boundediouloss}.
The original IoU loss can be formulated as \cite{giou}, 	%
%\begin{equation}
%\mathcal{L}_{IoU}=-ln(IoU)
%\end{equation}
%where
%\begin{equation}\label{eq:iou}
%IoU = \frac{|B\cap B^{gt}|}{|B\cup B^{gt}|}
%\end{equation}
%$B^{gt}=(x^{gt},y^{gt},w^{gt},h^{gt})$ is the ground-truth box, and $B=(x,y,w,h)$ is the predicted box.
%Also widely known IoU loss has another bounded form,
\begin{equation}\label{eq:iou loss}
\mathcal{L}_{IoU}=1-IoU.
\end{equation}
However, it fails in distinguishing the cases that two boxes do not overlap.
Then, GIoU \cite{giou} loss is proposed,
\begin{equation}\label{eq:giou loss}
\mathcal{L}_{GIoU}=1- IoU + \frac{|\mathcal{C}-\mathcal{B}\cup \mathcal{B}^{gt}|}{|\mathcal{C}|},
\end{equation}
where $\mathcal{C}$ is the smallest box covering $\mathcal{B}$ and $\mathcal{B}^{gt}$, and $|\mathcal{C}|$ is the area of box $\mathcal{C}$.
Due to the introduction of penalty term in GIoU loss, the predicted box will move towards the target box in non-overlapping cases.
GIoU loss has been applied to train state-of-the-art object detectors, e.g., YOLO v3 and Faster R-CNN, and achieves better precision than MSE loss and IoU loss.

\subsection{Analysis to IoU and GIoU Losses} \label{sec:simulation}

To begin with, we analyze the limitations of original IoU loss and GIoU loss.
However, it is very difficult to analyze the procedure of bounding box regression simply from the detection results, where the regression cases in uncontrolled benchmarks are often not comprehensive, e.g., different distances, different scales and different aspect ratios.
Instead, we suggest conducting simulation experiments, where the regression cases should be comprehensively considered, and then the issues of a given loss function can be easily analyzed.
\begin{algorithm}[!tb]	
	
	\footnotesize
	\caption{{Simulation Experiment}}
	\label{algo:RSValue}
	\begin{algorithmic}[1]
		\small{	
			\Require{Loss $\mathcal{L}$ is a continuous bounded function defined on $\mathbb{R}^4_{+}$.
				\newline $\mathbb{B}=\{\{\mathcal{B}_{n,s}\}_{s=1}^{S}\}_{n=1}^{N}$ is the set of anchor boxes at $N=5,000$ uniformly scattered points within the circular region with center $(10,10)$ and radius $3$, and $S=7\times 7$ covers $7$ scales and $7$ aspect ratios of anchor boxes.
				\newline $\mathbb{B}^{gt}=\{\mathcal{B}_{i}^{gt}\}_{i=1}^{7}$ is the set of target boxes that are fixed at $(10,10)$ with area 1, and have $7$ aspect ratios.
			}
			\Ensure{Regression error $\bm{E}\in \mathbb{R}^{T\times N}$}}
		%		\State{$D=0$, \quad $l=\{1,2\}$.}
		\State Initialize $\bm{E}=\mathbf{0}$ and maximum iteration $T$.
		\State{Do bounding box regression:}
		\For {$n=1$ to $N$}
		\For {$s=1$ to $S$}
		\For {$i=1$ to $7$}
		\For {$t=1$ to $T$}
		\State{$\eta = \begin{cases}0.1 &\text{if}\quad t<=0.8T \\0.01 &\text{if}\quad 0.8T<t<=0.9T \\0.001 &\text{if}\quad t>0.9T\end{cases}$}
		\State{\!\!\!\! $\nabla \mathcal{B}_{n,s}^{t-1}$ is gradient of $\mathcal{L}(\mathcal{B}_{n,s}^{t-1},\mathcal{B}^{gt}_i)$ w.r.t. $\mathcal{B}_{n,s}^{t-1}$}
		\State{$\mathcal{B}_{n,s}^{t}=\mathcal{B}_{n,s}^{t-1}+\eta(2-IoU_{n,s}^{t-1})\nabla \mathcal{B}_{n,s}^{t-1}$}
		\State{$\bm{E}(t,n)=\bm{E}(t,n) +|\mathcal{B}_{n,s}^{t}-\mathcal{B}_i^{gt}|$}
		\EndFor
		\EndFor
		%		\State{\bf{End for}}
		
		\EndFor
		\EndFor
		%		\State{\bf{End for}}
		\State{\Return{$\bm{E}$}}
	\end{algorithmic}
\end{algorithm}

\subsubsection{Simulation Experiment}
In the simulation experiments, we try to cover most of the relationships between bounding boxes by considering geometric factors including distance, scale and aspect ratio, as shown in Fig. \ref{fig:1715ksampling}(a).
In particular, we choose 7 unit boxes (i.e., the area of each box is 1) with different aspect ratios (i.e., 1:4, 1:3, 1:2, 1:1, 2:1, 3:1 and 4:1) as target boxes.
Without loss of generality, the central points of the 7 target boxes are fixed at $(10,10)$.
The anchor boxes are uniformly scattered at 5,000 points.
(\textbf{\emph{i}}) Distance: In the circular region centered at $(10,10)$ with radius 3, 5,000 points are uniformly chosen to place anchor boxes with 7 scales and 7 aspect ratios.
In these cases, overlapping and non-overlapping boxes are included.
(\textbf{\emph{ii}}) Scale: For each point, the areas of anchor boxes are set as $0.5$, $0.67$, $0.75$, $1$, $1.33$, $1.5$ and $2$.
(\textbf{\emph{iii}}) Aspect ratio: For a given point and scale, 7 aspect ratios are adopted, i.e., following the same setting with target boxes (i.e., 1:4, 1:3, 1:2, 1:1, 2:1, 3:1 and 4:1).
All the $5,000 \times 7 \times 7 $ anchor boxes should be fitted to each target box.
To sum up, there are totally $1,715,000 = 7 \times 7 \times 7 \times 5,000$ regression cases.

Then given a loss function $\mathcal{L}$, we can simulate the procedure of bounding box regression for each case using stochastic gradient descent algorithm.
For predicted box $\mathcal{B}_i$, the current prediction can be obtained by
\begin{equation}
\begin{aligned}
\mathcal{B}_{i}^{t}=\mathcal{B}_{i}^{t-1}+\eta(2-IoU_{i}^{t-1})\nabla \mathcal{B}_{i}^{t-1},
\end{aligned}
\end{equation}
where $\mathcal{B}_i^t$ is the predicted box at iteration $t$, $\nabla \mathcal{B}_i^{t-1}$ denotes the gradient of loss $\mathcal{L}$ w.r.t. $\mathcal{B}_i^{t-1}$ at iteration $t-1$, and $\eta$ is the learning rate.
It is worth noting that in our implementation, the gradient is multiplied by $2-IoU_i^{t-1}$ to accelerate the convergence.
The performance of bounding box regression is evaluated using $\ell_1$-norm.
%
%The details of simulation experiments are summarized in Algorithm \ref{algo:RSValue}.
%
For each loss function, the simulation experiment is terminated when reaching iteration $T=200$, and the error curves are shown in Fig. \ref{fig:1715ksampling}(b).

\begin{figure}[!t]
	\footnotesize
	\centering
	\includegraphics[width=0.45\columnwidth]{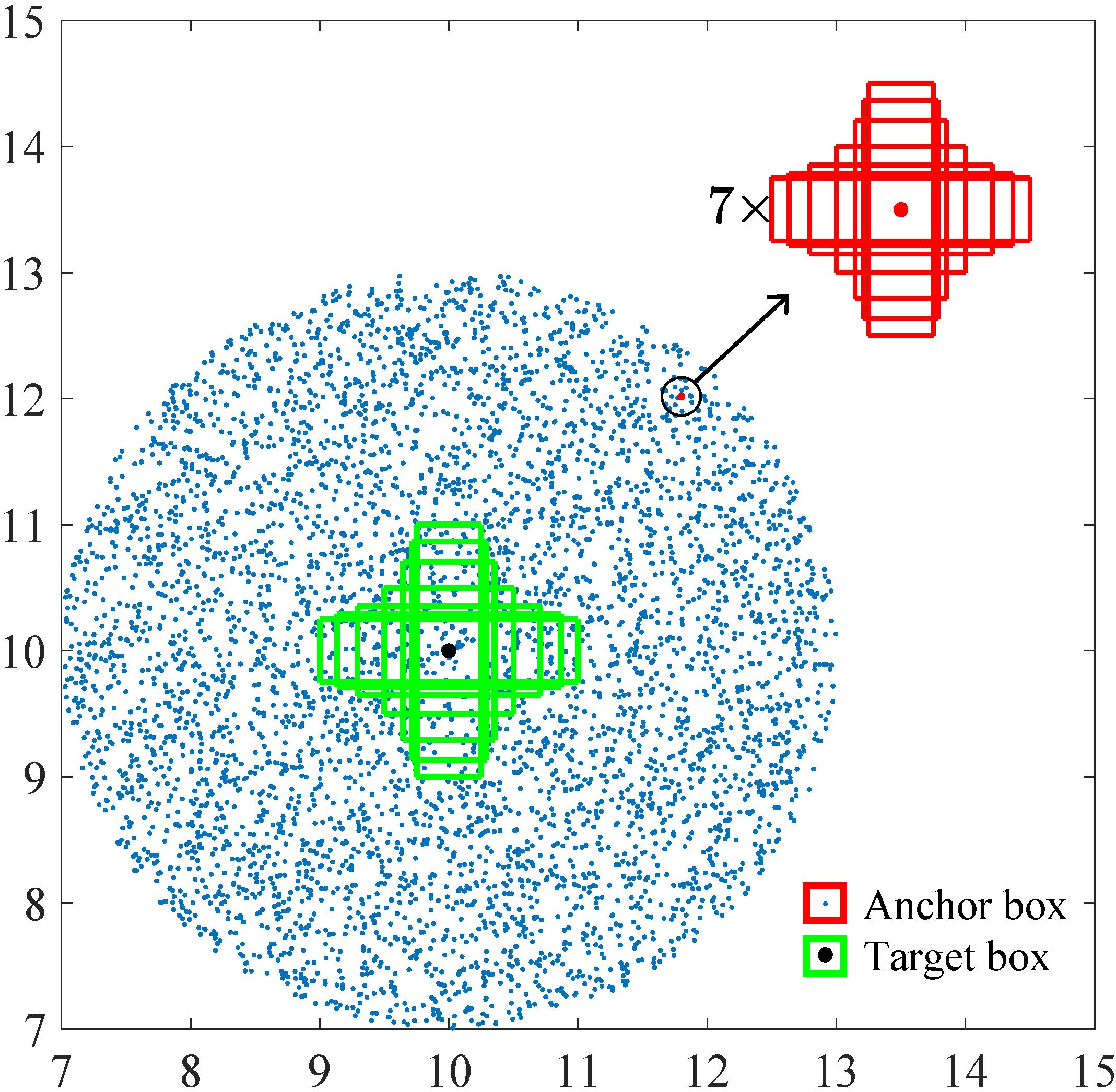}
	\includegraphics[width=0.5\columnwidth]{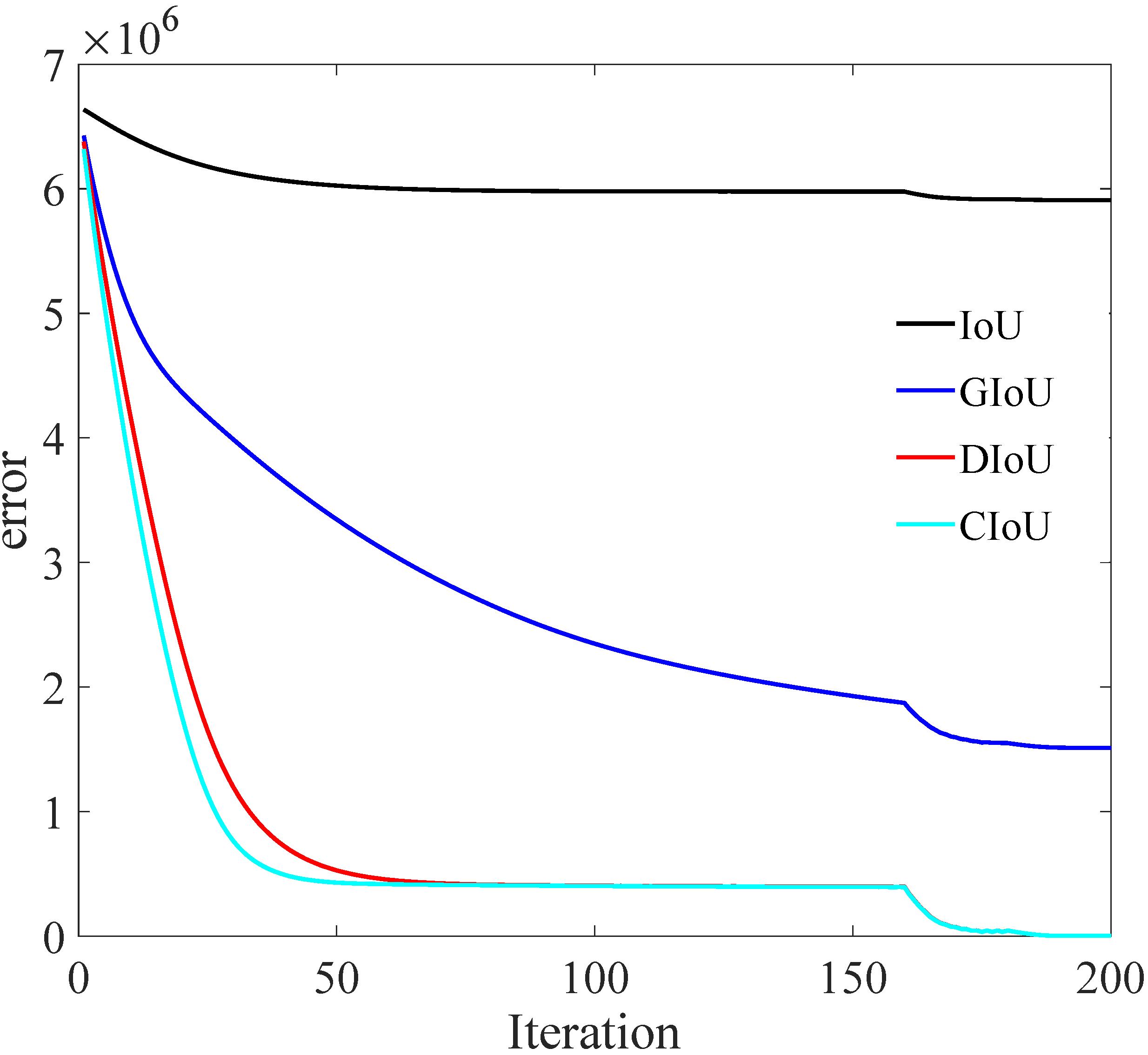}
	\caption{Simulation experiments: (a) 1,715,000 regression cases are adopted by considering different distances, scales and aspect ratios, (b) regression error sum (i.e., $\sum_{n} \bm{E}(t,n)$) curves of different loss functions at iteration $t$. }
	\label{fig:1715ksampling}
\end{figure}

\subsubsection{Limitations of IoU and GIoU Losses}
In Fig. \ref{fig:finalerror}, we visualize the final regression errors at iteration $T$ for 5,000 scattered points.
From Fig. \ref{fig:finalerror}(a), it is easy to see that IoU loss only works for the cases of overlapping with target boxes.
The anchor boxes without overlap will not move due to that the gradient is always 0.

By adding a penalty term as Eqn. \eqref{eq:giou loss}, GIoU loss can better relieve the issues of non-overlapping cases.
From Fig. \ref{fig:finalerror}(b), GIoU loss significantly enlarges the basin, i.e., the area that GIoU works.
But the cases  with extreme aspect ratios are likely to still have large errors.
This is because that the penalty term in GIoU loss is used to minimize $|C-A\cup B|$, but the area of $C-A\cup B$ is often small or 0 (when two boxes have inclusion relationships), and then GIoU almost degrades to IoU loss.
%
%	One may argue that GIoU will converge in more iterations.
%	\emph{Thus, we re-run a simulation experiment, where the maximum iteration number is increased $T=400$ for $\mathcal{L}_{GIoU}$}.
%	And from the error curve and final error (kindly refer to the figures in supplementary file), GIoU still cannot well converge.
%
GIoU loss would converge to good solution as long as running sufficient iterations with proper learning rates, but the convergence rate is indeed very slow.
Geometrically speaking, from the regression steps as shown in Fig. \ref{fig:regression steps}, one can see that GIoU actually increases the predicted box size to overlap with target box, and then the IoU term will make the predicted box match with the target box, yielding a very slow convergence.
\begin{figure*}[!tb]
	\footnotesize
	\centering
	\begin{tabular}{cccccccccc}
		\includegraphics[width=0.52\columnwidth]{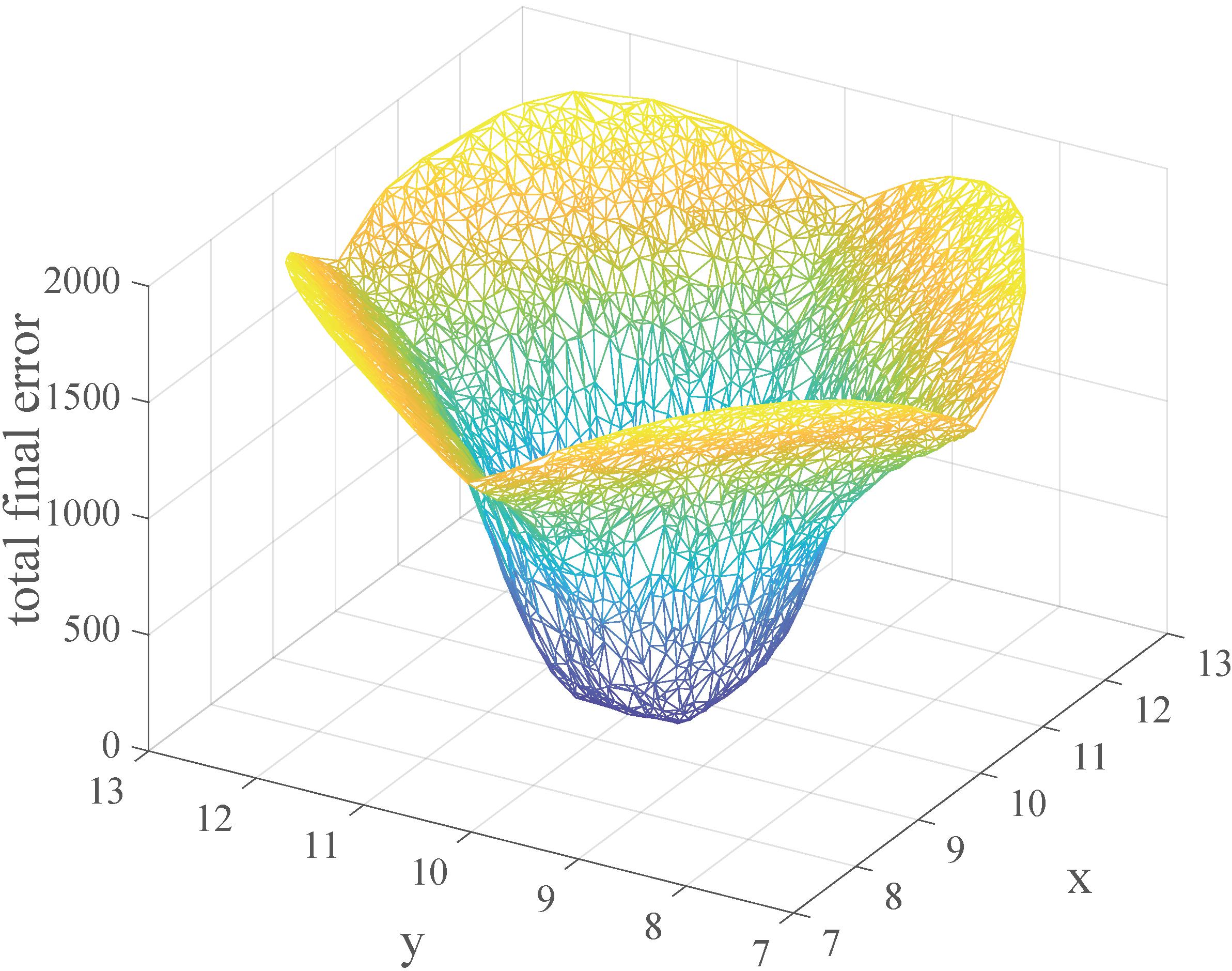}&
		\includegraphics[width=0.52\columnwidth]{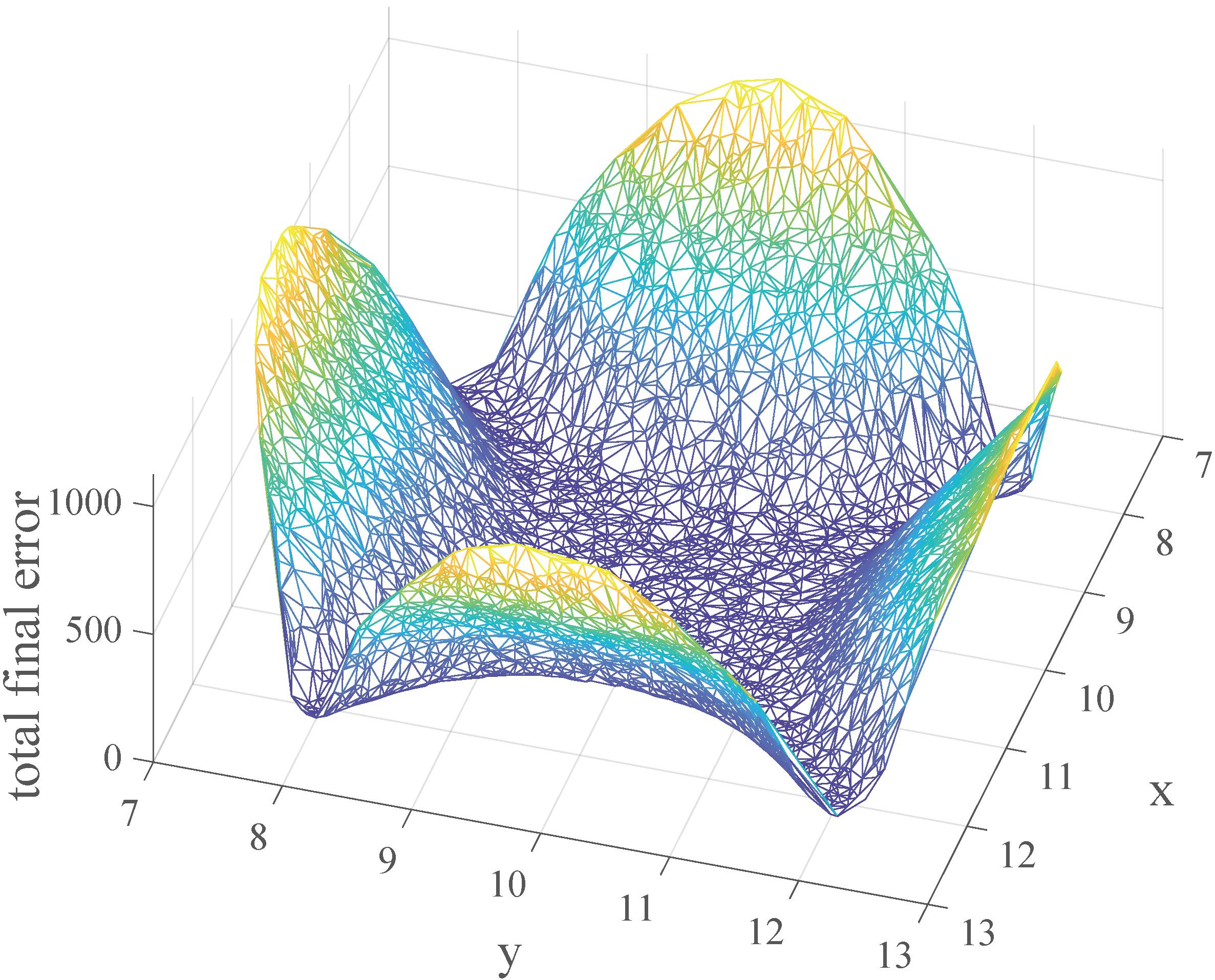}&
		\includegraphics[width=0.52\columnwidth]{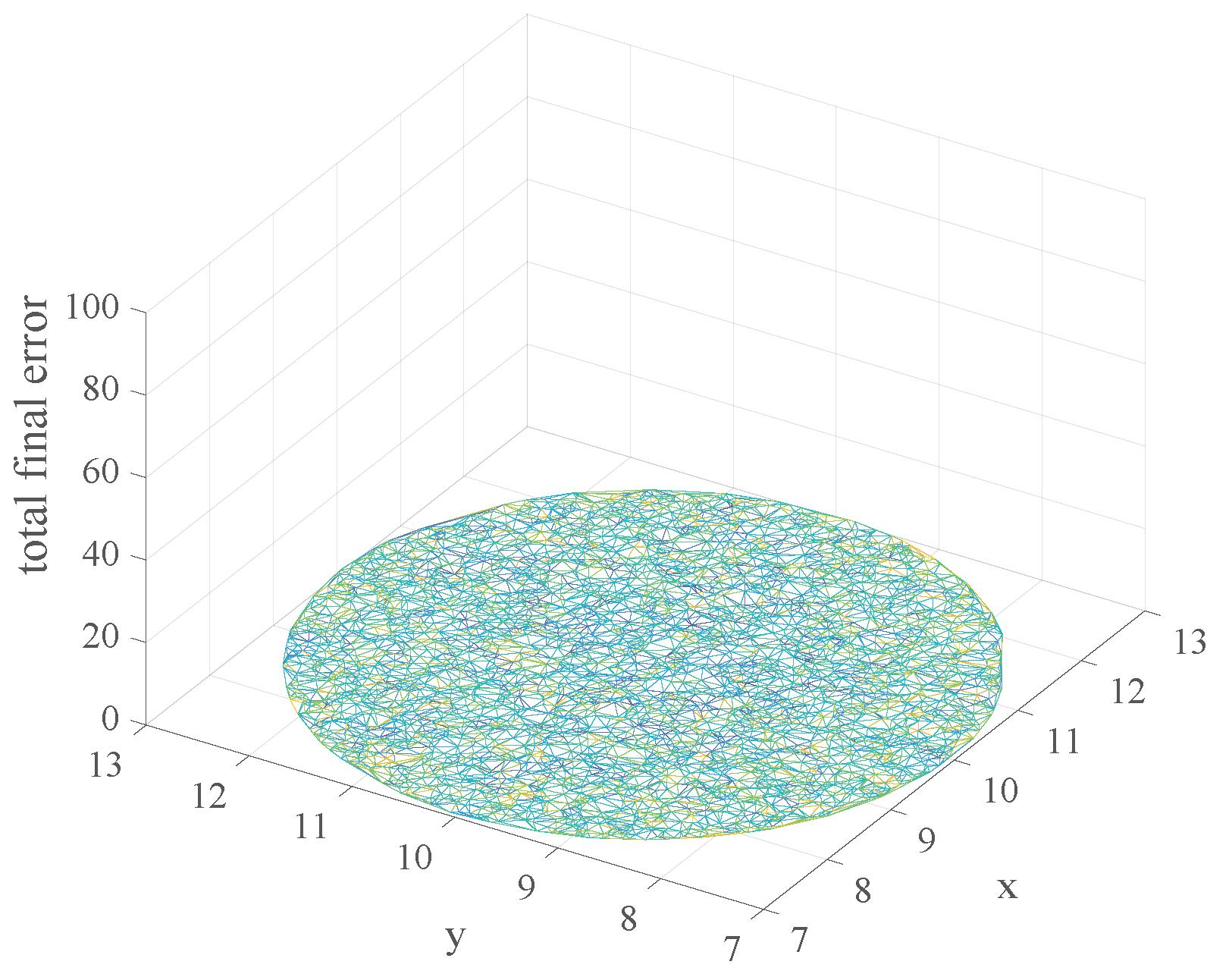}\\
		(a) $\mathcal{L}_{IoU}$ & (b) $\mathcal{L}_{GIoU}$ & (c) $\mathcal{L}_{CIoU}$
	\end{tabular}
	\caption{Visualization of regression errors of IoU, GIoU and CIoU losses at the final iteration $T$, i.e., $\bm{E}(T,n)$ for every coordinate $n$.
		We note that the basins in (a) and (b) correspond to good regression cases. One can see that IoU loss has large errors for non-overlapping cases, GIoU loss has large errors for horizontal and vertical cases, and our CIoU loss leads to very small regression errors everywhere.}
	\label{fig:finalerror}
\end{figure*}

To sum up, IoU-based losses only aim to maximize the overlap area of two boxes.
Original IoU loss converges to bad solutions for non-overlapping cases, while GIoU loss is with slow convergence especially for the boxes  with extreme aspect ratios.
And when incorporating into object detection or instance segmentation pipeline, both IoU and GIoU losses cannot guarantee the accuracy of regression.
We in this paper suggest that a good loss function for bounding box regression should enhance more geometric factors besides overlap area.

\subsection{CIoU Loss}

%In this section, we mainly explore the three geometric factors of bounding box regression and explain where the design philosophy of bounding box regression loss function comes from.
%Then we answer the \textbf{\emph{second}} question, by suggesting that a good loss for bounding box regression should consider three important geometric factors and satisfy scale invariance, normalization and dynamicity.

Considering the geometric factors for modeling regression relationships in Simulation Experiment, we suggest that a loss function should take three geometric factors into account, i.e., overlap area, distance and aspect ratio.
Generally, a complete loss can be defined as,
\begin{equation}\label{eq:reg}\small
\mathcal{L}= S(\mathcal{B},\mathcal{B}^{gt})+D(\mathcal{B},\mathcal{B}^{gt})+V(\mathcal{B},\mathcal{B}^{gt}),
\end{equation}
where $S$, $D$, $V$ denote the overlap area, distance and aspect ratio, respectively.
One can see that IoU and GIoU losses only consider the overlap area.
In the complete loss, IoU is only a good choice for $S$,
\begin{equation}\label{eq:ciou area}
\begin{array}{l}
S = 1-IoU.\\
\end{array}
\end{equation}
Similar to IoU, we want to make both $D$ and $V$ be also invariant to regression scale.
In particular, we adopt normalized central point distance to measure the distance of two boxes,
\begin{equation}\label{eq:ciou distance}
\begin{array}{l}
D = \frac{\rho^2(\bm{p},\bm{p}^{gt})}{c^2}, \\
\end{array}
\end{equation}
where $\bm{p} = [x,y]^\text{T}$ and $\bm{p}^{gt}=[x^{gt},y^{gt}]^\text{T}$ are the central points of boxes $\mathcal{B}$ and $\mathcal{B}^{gt}$, $c$ is the diagonal length of box $\mathcal{C}$, and $\rho$ is specified as Euclidean distance, as shown in Fig. \ref{fig:DIoU}.
\begin{figure}[!t]
	\centering
	\includegraphics[width=0.38\columnwidth]{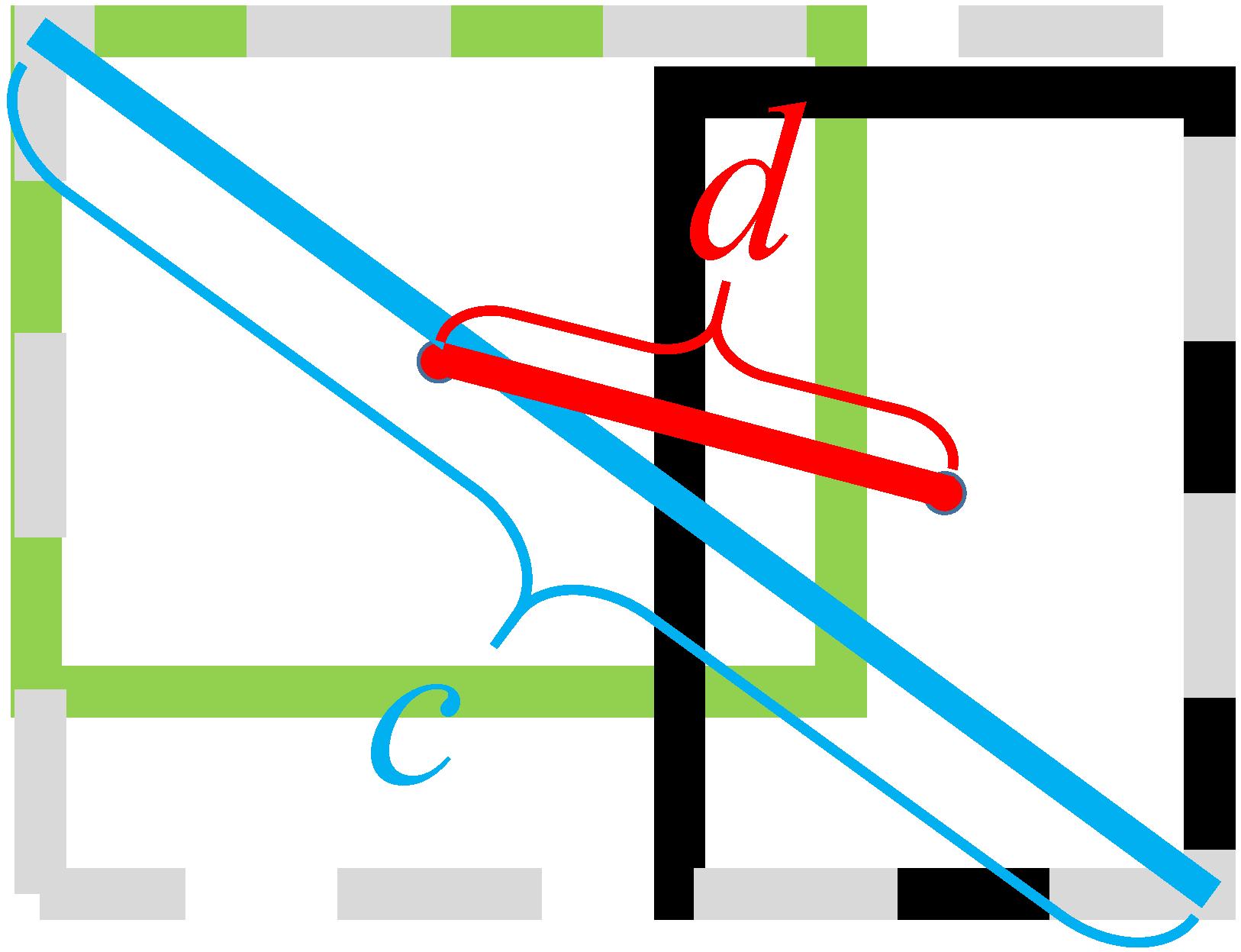}
	\caption{Normalized central point distance.
		$c$ is the diagonal length of the smallest enclosing box covering two boxes, and $d=\rho(\bm{p},\bm{p}^{gt})$ is the distance of central points of two boxes. }
	\label{fig:DIoU}
\end{figure}
And the consistency of aspect ratio is implemented as
\begin{equation}\label{three geometric}
\begin{array}{l}
V = \frac{4}{\pi^{2}}({\tt{arctan}}\frac{w^{gt}}{h^{gt}}-{\tt{arctan}}\frac{w}{h})^2.\\
\end{array}
\end{equation}
Finally, we obtain the Complete-IoU (CIoU) loss,
\begin{equation}\label{CIoU loss}
\mathcal{L}_{CIoU} = 1-IoU + \frac{\rho^2(\bm{p},\bm{p}^{gt})}{c^2} + \alpha V.
\end{equation}
It is easy to see that $S$, $D$ and $V$ are invariant to regression scale and are normalized to $[0,1]$.
Here, we only introduce one trade-off parameter $\alpha$, which is defined as
\begin{equation}
\alpha=\left\{
\begin{array}{ll}
0, &\text{if }IoU < 0.5,\\
\frac{V}{(1-IoU)+ V}, &\text{if }IoU\geq 0.5. \\
\end{array}\right.
\end{equation}
One can see that our CIoU loss will degrade to DIoU loss in our pioneer work \cite{diou} when $IoU<0.5$.
It is reasonable that when two boxes are not well matched, the consistency of aspect ratio is less important.
And when $IoU\geq 0.5$, the consistency of aspect ratio becomes necessary.

The proposed CIoU loss inherits some properties from IoU and GIoU losses.
(i)	CIoU loss is still invariant to the scale of regression problem.
(ii) Analogous to GIoU loss, CIoU loss can provide moving directions for bounding boxes when non-overlapping with target box.
Furthermore, our CIoU loss has two merits over IoU and GIoU losses, which can be evaluated by simulation experiment.
(i)	As shown in Fig. \ref{fig:regression steps} and Fig. \ref{fig:1715ksampling}, CIoU loss can rapidly minimize the distance of two boxes, and thus converges much faster than GIoU loss.
(ii) For the cases with inclusion of two boxes, or with extreme aspect ratios, CIoU loss can make regression very fast, while GIoU loss has almost degraded to IoU loss, i.e., $|\mathcal{C}-\mathcal{B}\cup \mathcal{B}^{gt}| \rightarrow 0$.

%\subsection{Non-Maximum Suppression using DIoU}
%	In Original NMS, the IoU metric is used to suppress the redundant detection boxes, where the overlap area is the unique factor, often yielding false suppression for the cases with occlusion.
%	%
%	We in this work suggest that DIoU is a better criterion for NMS, because not only overlap area but also central point distance between two boxes should also be considered in the suppression criterion.
%	%
%	For the predicted box $\mathcal{M}$ with the highest score, the DIoU-NMS can be formally defined as
%	\begin{equation}
%	{{s}_{i}}=\left\{
%	\begin{aligned}
%	& {{s}_{i}},\ IoU - \mathcal{R}_{DIoU}(\mathcal{M},B_i) < \varepsilon , \\
%	& 0,\ \ IoU - \mathcal{R}_{DIoU}(\mathcal{M},B_i) \ge \varepsilon , \\
%	\end{aligned} \right.
%	\end{equation}
%	where box $B_i$ is removed by simultaneously considering the IoU and the distance between central points of two boxes, $s_i$ is the classification score and $\varepsilon$ is the NMS threshold.
%	%
%	We suggest that two boxes with distant central points probably locate different objects, and should not be removed.
%	%
%	Moreover, the DIoU-NMS is very flexible to be integrated into any object detection pipeline with only a few lines of code.

\section{Cluster-NMS}\label{sec:Cluster-NMS}

Most state-of-the-art object detection \cite{yolov3,SSD,fasterrcnn} and instance segmentation \cite{yolact} adopt the strategy to place more anchor boxes to detect difficult and small objects for improving detection accuracy.
Moreover, NMS for suppressing redundant boxes is facing tremendous pressure during inference.

Let $\bm{B} = [\mathcal{B}_1, \mathcal{B}_2,\cdots,\mathcal{B}_N]^{\text{T}}$
\footnote{Actually, $\bm{B}$ is a tensor with size $C\times N\times 4$, which contains $C$ classes. Since different classes share the same NMS operation, we omit this channel for simplicity. }
be an $N\times 4$ matrix storing $N$ detected boxes.
These boxes have been sorted according to the non-ascending classification scores, i.e., $s_1 \geq s_2\geq \cdots \geq s_N$.
Original NMS is to suppress redundant boxes by sequentially traversing $N$ boxes.
Specifically, for the box $\mathcal{M}$ with the current highest score, original NMS can be formally defined as,
\begin{equation}\label{eq:nms}
{{s}_{j}}=\left\{
\begin{aligned}
& {{s}_{j}},\text{ if } IoU(\mathcal{M}, \mathcal{B}_j)  < \varepsilon , \\
& 0,\ \text{  if } IoU(\mathcal{M}, \mathcal{B}_j) \ge \varepsilon , \\
\end{aligned} \right.
\end{equation}
where $\varepsilon$ is a threshold.
Original NMS is very time-consuming.
And several improved NMS, e.g., Soft-NMS \cite{softnms} and non-maximum weighted (Weighted-NMS) \cite{CAD}, can further improve the precision and recall, but they are much more inefficient.
We propose Cluster-NMS, where NMS can be parallelly done on implicit clusters of detected boxes, usually requiring less iterations.
Besides, Cluster-NMS can be purely implemented on GPU, and is much more efficient than original NMS.
Then, we incorporate the geometric factors into Cluster-NMS for further improving both precision and recall, while maintaining high inference efficiency.
%

%Real time object detection and instance segmentation have become more and more important in video detection applications. At present, many detectors\cite{SSD,yolov3,fasterrcnn} are designed in pursuit of high recall rate. More anchor boxes or more powerful backbone network so as to seek some difficult and small objects.
%As a result, this puts tremendous pressure on the post-processing process. As an essential part of many detection frameworks, Original NMS is an iterative ergodic elimination method. It always selects the maximum score box in each iteration and suppress adjacent boxes based on $IoU>threshold$. While the Original NMS is fast enough and for the two-stage or multi-stage detector, the computational overhead is not obvious. However, when the speed requirement is getting higher and higher, it seems to be a little bit laborious. Fast NMS, proposed by Daniel Bolya et. al., simplifies NMS to calculate on tensor. It can achieve an appreciable speed improvement, but it will inevitably damages the performance. In this section, we will discuss whether NMS can be realized only by matrix transformation.

\subsection{Cluster-NMS}

\begin{algorithm}[!tb]	
	
	\footnotesize
	\caption{Cluster-NMS}
	\label{alg:Cluster-NMS}
	\begin{algorithmic}[1]
		\small{	
			\Require{$N$ detected boxes $\bm{B} = [\mathcal{B}_1, \mathcal{B}_2,\cdots,\mathcal{B}_N]^{\text{T}}$ with non-ascending sorting by classification score, i.e., $s_1 \geq \cdots \geq s_N$.
			}
			\Ensure{$\bm{b}=\{b_i\}_{1\times N}, b_i\in \{0,1\}$ encodes final detection result, where $1$ denotes reservation and $0$ denotes suppression.}}
		%		\State{$D=0$, \quad $l=\{1,2\}$.}
		\State Initialize $T=N$, $t=1$, $t^*=T$ and $\bm{b}^0=\mathbf{1}$
		\State Compute IoU matrix $\bm{X}=\{x_{ij}\}_{N\times N}$ with $x_{ij}=IoU(\mathcal{B}_i, \mathcal{B}_{j})$.
		\State $\bm{X}={\tt triu}(\bm{X})$ \Comment{Upper triangular matrix with $x_{ii} =0, \forall i$}
		\While {$t\leq T$}
		%			\State $\bm{q}^t = \bm{b}^{t-1}$
		\State $\bm{A}^t = {\tt diag}(\bm{b}^{t-1})$
		\State $\bm{C}^t =\bm{A}^t\times \bm{X}$
		\State $\bm{g} \leftarrow \max\limits_j \bm{C}^t$ \Comment{Find maximum for each column $j$}
		\State $\bm{b}^t\leftarrow {\tt find}(\bm{g} <\varepsilon)$   \Comment{$\left\{\begin{array}{ll}
			b_{j} =1,& \text{ if } g_{j}<\varepsilon\\
			b_{j} =0,& \text{ if } g_{j}\geq \varepsilon\\
			\end{array}}\right.$
		
		\If {$\bm{b}^t==\bm{b}^{t-1}$}
		\State $t^*=t$, break
		\EndIf
		\State $t=t+1$
		\EndWhile
		\State\textbf{return} {$\bm{b}^{t^*}$}
	\end{algorithmic}
\end{algorithm}

We first compute the IoU matrix $\bm{X}=\{x_{ij}\}_{N\times N}$, where $x_{ij} =IoU(\mathcal{B}_i,\mathcal{B}_j)$.
We note that $\bm{X}$ is a symmetric matrix, and we only need the upper triangular matrix of $\bm X$, i.e., $\bm{X}={\tt triu}(\bm{X})$ with $x_{ii}=0, \forall i$.
In Fast NMS \cite{yolact}, the suppression is directly performed on the matrix $\bm X$, i.e., a box $\mathcal{B}_j$ would be suppressed as long as $\exists$ $x_{ij} > \varepsilon$.
Fast NMS is indeed efficient, but it suppresses too many boxes.
Considering $\exists \mathcal{B}_i$ has been suppressed, the box $\mathcal{B}_i$ should be excluded when making the rule for suppressing $\mathcal{B}_j$ even if $x_{ij}>\varepsilon$.
But in Fast NMS, $\mathcal{B}_i$ is actually taken into account, thereby being likely to over-suppress more boxes.

Let $\bm b = \{b_i\}_{1\times N}, b_i \in\{0,1\}$ be a binary vector to indicate the NMS result.
We introduce an iterative strategy, where current suppressed boxes with $b_i = 0$ would not affect the results.
Specifically, for iteration $t$, we have the NMS result $\bm{b}^{t-1}$, and introduce two matrices,
\begin{equation}
\begin{aligned}
\bm{A}^t &= {\tt diag}(\bm{b}^{t-1}),\\
\bm{C}^t &= \bm{A}^t \times \bm{X}.
\end{aligned}
\end{equation}
Then the suppression is performed on the matrix $\bm C$.
The details can be found in Alg. \ref{alg:Cluster-NMS}.
All these operations can be implemented on GPU, and thus Cluster-NMS is very efficient.

\begin{figure*}[!t]
	\centering
	\includegraphics[width=1\textwidth]{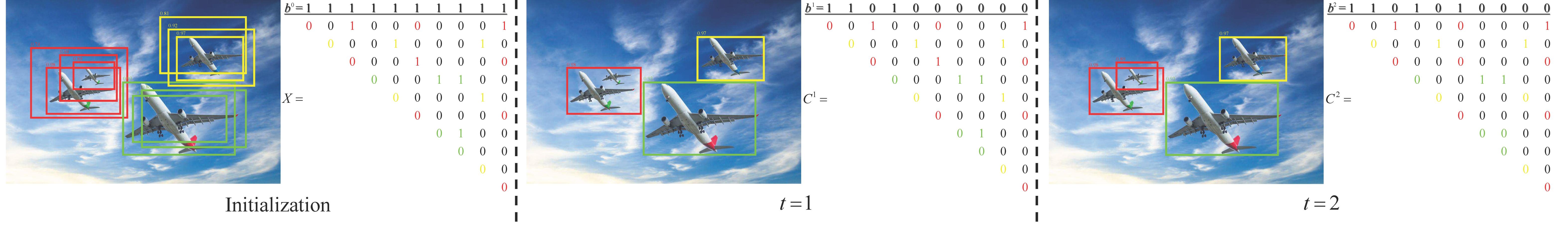}
	\caption{An example of Cluster-NMS, where 10 detected boxes are implicitly grouped into 3 clusters. The IoU matrix $\bm X$ has been binarized by threshold $\varepsilon=0.5$.
		When $t=1$, Cluster-NMS is equivalent to Fast NMS \cite{yolact}, where the boxes are over-suppressed.
		Theoretically, Cluster-NMS will stop after at most $4$ iterations, since the largest cluster (red boxes) has 4 boxes.
		But $\bm b$ after only $2$ iterations is exactly the final result of original NMS, indicating that Cluster-NMS usually requires less iterations.
	}
	\label{fig:earlystop}
\end{figure*}

When $T=1$, Cluster-NMS degrades to Fast NMS, and when $T=N$, the result of Cluster-NMS is totally same with original NMS.
Due to the pure operation on matrix $\bm A$ and $\bm X$, the predicted boxes without overlaps are implicitly grouped into different clusters, and the suppression is performed in parallel between clusters.
Cluster-NMS can guarantee the same result with original NMS, and generally can stop with less iterations, referring to Sec. \ref{sec:theoretical} for theoretical analysis.
%
%		Consequently, $t^* \leq M$ in Alg. \ref{alg:Cluster-NMS}.
%	\end{proof}
%
We present an example in Fig. \ref{fig:earlystop}, where 10 detected boxes can be divided into 3 clusters and the largest cluster contains 4 boxes.
Thus, the maximum iteration $t^{*}$ of Cluster-NMS in Alg. \ref{alg:Cluster-NMS} is 4.
But in Fig. \ref{fig:earlystop}, $\bm b$ after only $2$ iterations is exactly the final result of original NMS.
In practical, Cluster-NMS usually requires less iterations.
%
%	On benchmark datasets, e.g., MS COCO 2017, Cluster-NMS will stop after 6$\sim$12 iterations for 100 detected boxes.
%	\textcolor{blue}{In fact, it will stop at iteration $m\leq M$, where $m$ represents the maximum number of iterations in all clusters.}
%
%	More intuitively, let consider there are $m$ clusters on a image. Each cluster contains several detected boxes.

We also note that original NMS has been implemented as CUDA NMS in Faster R-CNN \cite{fasterrcnn}, which is recently included into TorchVision 0.3.
TorchVision NMS is faster than Fast NMS and our Cluster-NMS (see Table \ref{tab:yolonms}), due to engineering accelerations.
Our Cluster-NMS requires less iterations and can also be further accelerated by adopting these engineering tricks, e.g., logic operations on binarized $\bm{X}$ as in Proposition \ref{prop1}.
But this is beyond the scope of this paper.
Moreover, the main contribution of Cluster-NMS is that geometric factors can be easily incorporated into Cluster-NMS, while maintaining high efficiency.

\subsection{Incorporating Geometric Factors into Cluster-NMS}
%Notice that there is a TorchVision NMS which is engineering implemented based on CUDA NMS. CUDA implementation can achieve highly targeted optimization than Pytorch implementation. Therefore, CUDA NMS can output the same detection results with Original NMS and meanwhile with a faster speed, since it can at least avoid calculating the IoU for one box and itself. On the contrary, both Fast NMS and our Matrix-NMS cannot avoid calculating the IoU matrix and do the upper triangularization, so does them cost a little more time than TorchVision NMS. However, there still be several NMS method which can perform better than Original NMS.
%Thus the biggest contribution of Matrix-NMS is that we can utilize the matrix $C$ which is obtained by Alg. \ref{alg:matrixnms} to further determine the better detection results. In this section, we introduce two method based on Matrix-NMS. The Score Penalty Mechanism and Weighted Cluster-NMS.

Geometric factors for measuring bounding box regression can be introduced into Cluster-NMS for improving precision and recall.

\subsubsection{{Score Penalty Mechanism into Cluster-NMS}}
Instead of the hard threshold in original NMS Eqn. \eqref{eq:nms}, we introduce a Score Penalty Mechanism based on the overlap areas into Cluster-NMS, analogous to Soft-NMS.
Specifically, in Cluster-NMS$_{S}$, we adopt the score penalty following "Gaussian" Soft-NMS \cite{softnms}, and the score $s_j$ is re-weighted as
%    \begin{equation}
%	S=S.*\prod\limits_i e^{-\frac{(EX)^2}{\sigma}}
%	\end{equation}
\begin{equation}
s_j=s_j\prod\limits_i e^{-\frac{(\bm{A}\times\bm{X})_{ij}^2}{\sigma}},
\end{equation}
where $\sigma = 0.2$ in this work. %And we use score threshold 0.01 to output the final results.
It is worth noting that our Cluster-NMS$_{S}$ is not completely same with Soft-NMS.
In Cluster-NMS$_S$, $s_j$ is only penalized by those boxes with higher scores than $s_j$, since $\bm{A} \times \bm{X}$ is an upper triangular matrix.
%
%	Cluster-NMS$_S$ is suitable for improving YOLACT \cite{yolact}.
%which is more robust to the cases with shelters, especially the large objects will have benefits... . Besides, the box that do not have many $IoU\geqslant0.5$ with the others maybe survive.

\subsubsection{{Normalized Central Point Distance into Cluster-NMS}}
As suggested in our pioneer work \cite{diou}, normalized central point distance can be included into NMS to benefit the cases with occlusions.
By simply introducing the normalized central point distance $D$  in Eqn. \eqref{eq:ciou distance} into IoU matrix $\bm X$, Cluster-NMS$_D$ is actually equivalent with DIoU-NMS in our pioneer work \cite{diou}.
Moreover, we can incorporate the normalized central point distance into Cluster-NMS$_{S}$, forming Cluster-NMS$_{S+D}$.
Specifically, the score $s_j$ is penalized as
\begin{equation}
s_j=s_j\prod\limits_i \min\{e^{-\frac{(\bm{A}\times \bm{X})_{ij}^2}{\sigma}}+D^{\beta},1\},
\end{equation}
where $\beta = 0.6$ is a trade-off parameter for balancing the precision and recall (see Fig. \ref{fig:DmNMS}).
%Notice that our purpose of incorporating $D^{\beta}$ into NMS is that we expect those boxes whose central point is closer to the higher score boxes should be penalize more.

\subsubsection{{Weighted Coordinates into Cluster-NMS}}
Weighted-NMS \cite{CAD} is a variant of original NMS.
Instead of deleting redundant boxes, Weighted-NMS creates new box by merging box coordinates according to the weighted combination of the boxes based on their scores and overlap areas.
The formulation of Weighted-NMS is as follow,
\begin{equation}
\mathcal{B}=\frac{1}{\sum\limits_jw_j}\sum\limits_{\mathcal{B}_j\in\Lambda}w_j \mathcal{B}_j,
\end{equation}
where $\Lambda=\{\mathcal{B}_j \ |\ x_{ij}\geq \varepsilon, \forall i\}$ is a set of boxes, weight $w_j=s_jIoU(\mathcal{B},\mathcal{B}_j)$, and $\mathcal{B}$ is the newly created box.
However, Weighted-NMS is very inefficient because of the sequential operations on every box.

Analogously, such weighted strategy can be included into our Cluster-NMS.
In particular, given the matrix $\bm C$, we multiply its every column using the score vector $\bm{s} = [s_1,s_2,\cdots,s_N]^{\text{T}}$ in the entry-by-entry manner, resulting in $\bm{C}'$ to contain both the classification score and IoU.
Then for the $N$ detected boxes $\bm{B}=[\mathcal{B}_1,\mathcal{B}_2,\cdots,\mathcal{B}_N]^{\text{T}}$, their coordinates can be updated by
\begin{equation}
\bm{B}=\frac{\bm{C}'\times \bm{B}}{\textit{Repmat}_4(\sum_i\bm{C}'(i,:))},
\end{equation}
where $\textit{Repmat}_4$ copies the $N\times 1$ vector 4 times to form $N\times4$ matrix, making the entry-wise division feasible.
The Cluster-NMS$_W$ shares the same output form with Cluster-NMS, but the coordinates of boxes have been updated.
Moreover, the normalized central point distance can be easily assembled into Cluster-NMS$_{W}$, resulting in Cluster-NMS$_{W+D}$, where the IoU matrix $\bm X$ is computed by considering both overlap area and distance as DIoU-NMS \cite{diou}.
Cluster-NMS$_W$ has the same result with Weighted-NMS as well as 6.1 times faster efficiency, and Cluster-NMS$_{W+D}$ contributes to consistent improvements in both AP and AR (see Table \ref{tab:yolonms}).

\subsection{Theoretical Analysis} \label{sec:theoretical}
In the following, we first prove that Cluster-NMS with $T=N$ iterations can achieve the same suppression result with original NMS, and then discuss that Cluster-NMS usually requires less iterations.

\begin{proposition}\label{prop1}
	Let $\bm{b}^T$ be the result of Cluster-NMS after $T$ iterations, $\bm{b}^T$ is also the final result of original NMS.
\end{proposition}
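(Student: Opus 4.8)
The plan is to reduce both procedures to explicit per-coordinate recursions on the score-sorted index $j$, and then to prove by induction on the iteration count $t$ that the first $t$ entries of $\bm{b}^{t}$ already agree with the original NMS result; instantiating at $t=T=N$ yields the statement. First I would make one Cluster-NMS step explicit at the entry level. Since $\bm{A}^{t}={\tt diag}(\bm{b}^{t-1})$ and $\bm{X}$ is upper triangular with zero diagonal, we have $(\bm{C}^{t})_{ij}=b_i^{t-1}x_{ij}$, which vanishes whenever $i\geq j$; hence $g_j=\max_i(\bm{C}^{t})_{ij}=\max_{i<j}b_i^{t-1}x_{ij}$. The update therefore reads $b_j^{t}=1$ iff $b_i^{t-1}x_{ij}<\varepsilon$ for every $i<j$, i.e.\ box $j$ survives iteration $t$ exactly when no higher-scored box that survived iteration $t-1$ overlaps it by at least $\varepsilon$.

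Next I would write the original NMS output as the unique vector $\bm{b}^{*}$ satisfying $b_j^{*}=1$ iff $b_i^{*}x_{ij}<\varepsilon$ for all $i<j$. This characterization is forced by the sequential deletion rule together with the sorting $s_1\geq\cdots\geq s_N$: box $j$ is removed precisely when some kept box $i$ with higher score, necessarily $i<j$, has $IoU(\mathcal{B}_i,\mathcal{B}_j)\geq\varepsilon$. Because the condition for index $j$ refers only to indices $i<j$, the recursion determines $\bm{b}^{*}$ uniquely starting from $b_1^{*}=1$, and it has the same right-hand side as the Cluster-NMS update with $\bm{b}^{t-1}$ replaced by $\bm{b}^{*}$.

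The core step would be an induction on $t$ establishing that $b_j^{t}=b_j^{*}$ for all $j\leq t$. The base case is immediate since $b_1^{1}=b_1^{*}=1$ (no index precedes $1$). For the inductive step, assuming $b_j^{t}=b_j^{*}$ for all $j\leq t$, any $j\leq t+1$ has all its predecessors $i<j$ satisfying $i\leq t$, so the hypothesis gives $b_i^{t}=b_i^{*}$; the update then becomes $b_j^{t+1}=1$ iff $b_i^{*}x_{ij}<\varepsilon$ for all $i<j$, which is exactly the defining condition of $b_j^{*}$. Taking $t=N$ gives $\bm{b}^{N}=\bm{b}^{*}$. I would then reconcile this with the early-stopping test in Alg.\ \ref{alg:Cluster-NMS}: if the loop halts at $t^{*}<N$ with $\bm{b}^{t^{*}}=\bm{b}^{t^{*}-1}$, that common vector is a fixed point of the update, so it is unchanged by all further steps and equals $\bm{b}^{N}=\bm{b}^{*}$; thus the returned vector is the original NMS result regardless of when the loop stops.

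I expect the main obstacle to be the bookkeeping that converts the column-max over the masked IoU matrix into the per-index ``largest surviving predecessor'' condition, and specifically checking that the upper-triangular convention ($x_{ij}=0$ for $i\geq j$) aligns exactly with the score ordering so that the two recursions share an identical right-hand side. Once this alignment and the uniqueness of $\bm{b}^{*}$ are in place, the prefix-stabilization induction and the fixed-point absorption argument are routine.
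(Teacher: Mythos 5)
Your proof is correct and follows essentially the same route as the paper's: an induction on the iteration count showing that after $t$ iterations the first $t$ entries of $\bm{b}^{t}$ already coincide with the original NMS output, with your entry-wise identity $g_j=\max_{i<j}b_i^{t-1}x_{ij}$ playing the role of the paper's block-matrix computation of $\bm{C}_{k+1}$ and $b_{k+1}=\lnot\max(\bm{A}_k\times\bm{X}_{:,k+1})$. Your two additions --- the explicit uniqueness characterization of the original NMS output as the solution of the triangular recursion, and the fixed-point absorption argument showing the early-stopped vector still equals $\bm{b}^{N}$ --- go slightly beyond the paper's proof but only strengthen it.
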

\begin{proof}
	
	Let $\bm{X}_k$ be the square block matrix of $\bm{X}$ containing the first $k\times k$ partition, while let $\bm{X}_{N-k}$ be the square block matrix of $\bm{X}$ containing the last $(N-k)\times(N-k)$ partition.
	The matrices $\bm A$ and $\bm C$ share the same definition.
	Let $\bm{b}_k^k$ be the subvector of $\bm{b}$ containing the first $k$ elements after $k$ iterations in Cluster-NMS.
	And it is straightforward that $\bm{A}_k = {\tt diag}(\bm{b}_k^k)$.
	Besides, we binarize the upper triangular IoU matrix $\bm{X}=\{x_{ij}\}_{N\times N}$ by threshold $\varepsilon$,
	\begin{equation}
		\left\{\begin{array}{ll}
			x_{ij} =0,& \text{ if } x_{ij}<\varepsilon,\\
			x_{ij} =1,& \text{ if } x_{ij}\geq \varepsilon,\\
		\end{array}\right.
	\end{equation}
	which does not affect the result by Cluster-NMS in Alg. \ref{alg:Cluster-NMS}, but makes the proof easier to understand.
	
	%Since the redundant iterations have no effect on $\mathcal{D}^t$, for the convenience of counting, here we do $n$ iterations.
	
	(\emph{\textbf{i}}) When $t=1$, $\bm{b}^1_1=[b_1]^{\text{T}}=[1]^{\text{T}}$ is same to the result by original NMS, since the first box with the largest score is kept definitely.
	When $t=2$, we have
	\begin{equation}
		\bm{C}=\bm{A}\times \bm{X} = \left(\begin{array}{cc}
			\bm{A}_{2} &\mathbf{0} \\ \mathbf{0} & \bm{A}_{N-2}\\ \end{array}\right)
		\left(\begin{array}{cc}
			\bm{X}_{2} & \bm{X}_{other} \\ \mathbf{0} & \bm{X}_{N-2}\\ \end{array}\right).\end{equation}
	Since the result $\bm{b}_2^2$ is not affected by the last $N-k$ boxes, we only consider
	\begin{equation}
		\bm{C}_2=\bm{A}_2\times \bm{X}_2 = \left(
		\begin{array}{cc} 0 & b_1x_{1,2} \\ 0 & 0\\ \end{array}\right)=\left(\begin{array}{cc} 0 & x_{1,2} \\ 0 & 0\\ \end{array}\right),
	\end{equation}
	where $x_{1,2}$ is a binary value.
	And thus $\bm{b}^2_2=[b_1,b_2]^{\text{T}}$, where $b_2=\lnot x_{1,2}$ is exactly the output of original NMS at iteration $t=2$.
	
	(\emph{\textbf{ii}}) Then we assume that when $t=k$, $\bm{b}^k_k$ determined by $\bm{C}_k$ is same with the result of original NMS after iteration $t=k$.
	When $t=k+1$, we have
	\begin{equation}
		\bm{C}=\bm{A}\times \bm{X} = \left(\begin{array}{cc}
			\bm{A}_{k+1} &\mathbf{0} \\ \mathbf{0} & \bm{A}_{N-k-1}\\ \end{array}\right)
		\left(\begin{array}{cc}
			\bm{X}_{k+1} & \bm{X}_{other} \\ \mathbf{0} & \bm{X}_{N-k-1}\\ \end{array}\right).\end{equation}
	Analogously, we do not care these block matrices $\bm{A}_{N-k-1}, \bm{X}_{other} \text{ and } \bm{X}_{N-k-1}$, since they do not affect $\bm{C}_{k+1}$,
	\begin{equation}\begin{aligned}
			\bm{C}_{k+1}= \bm{A}_{k+1}\times \bm{X}_{k+1}&=\left(\begin{array}{cc} \bm{A}_k & 0 \\ \mathbf{0} & b\\
			\end{array}\right)
			\left(\begin{array}{cc}
				\bm{X}_k & \bm{X}_{:,k+1} \\ \mathbf{0} & 0\\
			\end{array}\right),\\
			&=\left(\begin{array}{cc}
				\bm{C}_k & \bm{A}_k\times \bm{X}_{:,k+1} \\ \mathbf{0} & 0\\
			\end{array}\right),
	\end{aligned}\end{equation}
	where $\bm{X}_{:,k+1}$ is the $k$-th column of $\bm{X}_k$ by excluding the last 0.
	Then it is easy to see that $b_{k+1}$ can be determined by
	\begin{equation}
		b_{k+1}=\lnot \max (\bm{A}_k \times \bm{X}_{:,k+1}),
	\end{equation}
	which is the output of original NMS at iteration $t=k+1$.
	And thus $\bm{b}^{k+1}_{k+1} = [\bm{b}_k^k; b_{k+1}]$ is same with the result of original NMS after iteration $t=k+1$.
	
	Combining (\emph{\textbf{i}}) and (\emph{\textbf{ii}}), it can be deduced that $\bm{b}^T$ after $T$ iterations in Cluster-NMS is exactly the final result of original NMS.
\end{proof}

\emph{\textbf{Discussion}}:
Cluster-NMS usually requires less iterations.
%
%	\begin{proof}
Let $\mathbb{B}^* = \{\mathcal{B}_{j_1},\mathcal{B}_{j_2},\cdots,\mathcal{B}_{j_M}\}$ be the largest cluster containing $M$ boxes, where a box $\mathcal{B}_j \in \mathbb{B}^*$ if and only if $\exists i \in \{j_1,j_2,\cdots,j_M\}, IoU(\mathcal{B}_i,\mathcal{B}_j) \geq \varepsilon$, and $IoU(\mathcal{B}_j,\mathcal{B}_u) < \varepsilon, \forall j \in \{j_1,j_2,\cdots,j_M\}$ and $\forall u \notin \{j_1,j_2,\cdots,j_M\}$.
Thus for the binarized IoU matrix $\bm X$, $x_{u,j}=0$, $\forall j \in \{j_1,j_2,\cdots,j_M\}$ and $\forall u\notin \{j_1,j_2,\cdots,j_M\}$.
That is to say $\mathbb{B}^*$ is actually processed without considering boxes in other clusters, and after $M$ iterations, $\bm b$ definitely will not change.
Different clusters are processed in parallel by Cluster-NMS.

\section{Experimental Results}\label{sec:experiment}
In this section, we evaluate our proposed CIoU loss and Cluster-NMS for state-of-the-art instance segmentation YOLACT \cite{yolact} and BlendMask-RT \cite{blendmask}, and object detection YOLO v3 \cite{yolov3}, SSD \cite{SSD} and Faster R-CNN \cite{fasterrcnn} on popular benchmark datasets, e.g., PASCAL VOC \cite{voc} and MS COCO \cite{coco}.
%ting them into the state-of-the-art object detection algorithms. For DIoU and CIoU losses we evaluated on one-stage detector (\ie., YOLO v3 and SSD) and two-stage detector (\ie, Faster RCNN). And for Matrix-NMS, we report the results on the state-of-the-art instance segmentation model (can reach $30$ FPS) YOLACT and TOLACT++.
%
CIoU loss is implemented in C/C++ and Pytorch, and Cluster-NMS is implemented in Pytorch.
For the threshold $\varepsilon$, we adopt their default settings.
The source code and trained models have been made publicly available.

\begin{table*}\footnotesize
	\centering
	\setlength{\tabcolsep}{2pt}
	\caption{Instance segmentation results of YOLACT \cite{yolact}. The models are re-trained using Smooth-$\ell_1$ loss and CIoU loss by us, and the results are reported on MS COCO 2017 validation set \cite{coco}. }
	\newcommand{\modelname}[1]{\methodname{}-#1}
	\begin{spacing}{1.2}
		\begin{tabular}{c c| c |cc| cc cc| ccc c ccc c| ccc c ccc}
			\hline
			Method && Loss              && NMS Strategy &&    FPS     &     Time   &&  AP  & AP$_{50}$ & AP$_{75}$ &&  AP$_{S}$ &  AP$_{M}$ &  AP$_{L}$ && AR$_1$ & AR$_{10}$ & AR$_{100}$ && AR$_{S}$ & AR$_{M}$ &  AR$_{L}$ \\
			\hline
			&& Smooth-$\ell_1$   &&  Fast NMS    && {\bf 30.6}&{\bf 32.7}   &&   29.1   &   47.4   &   30.5   &&   9.4   &    32.0  &    48.5  &&   27.5   &    39.2  &   40.3   &&   18.8   &   44.7   &    59.8   \\
			\cline{3-24}
			&&\multirow{5}{*}{$\calL_{CIoU}$}
			&&    Fast NMS      &&{\bf 30.6}&{\bf 32.7}&&   29.6   &   48.1   &   30.9   &&   9.4   &    32.0  &    49.7  &&   27.6   &    39.3  &   40.3   &&   18.0   &   44.0   &    60.8   \\
			\multirow{2}{*}{YOLACT-550 \cite{yolact}}&&&&  Original NMS  &&   11.5   &   86.6   &&   29.7   &   48.3   &   31.0   &&   9.4   &    32.2  &    49.8  &&   27.5   &    40.1  &   41.7   &&   18.7   &   45.8   &    62.8   \\
			\cline{4-24}
			\multirow{2}{*}{R-101-FPN}
			&&&&  Cluster-NMS    &&   28.8   &   34.7   &&   29.7   &   48.3   &   31.0   &&   9.4   &    32.2  &    49.7  &&   27.5   &    40.1  &   41.7   &&   18.8   &   45.8   &    62.8   \\
			%				&&&& Cluster-NMS (D) &&   27.9   &   35.8   &&   29.6   &   48.1   &   31.0   &&   9.4   &    32.1  &    49.7  &&   27.5   &    40.1  &   41.8   &&   18.7   &   45.8   &    63.2   \\
			&&&& Cluster-NMS$_{S}$ &&   28.6   &   35.0   &&{\bf 30.3}&{\bf 49.1}&{\bf 31.7}&&{\bf 9.7} &{\bf 33.0}&{\bf 50.8}&&{\bf 27.7}&{\bf 41.4}&   43.6   &&{\bf 19.7}&   47.7   &    65.9   \\
			&&&& Cluster-NMS$_{S+D}$&&   27.1   &   36.9   && 30.2&   48.9   &{\bf 31.7}&& 9.6&            32.8&   50.7  &&27.6&    41.3   &{\bf 43.8}&&  19.5   &{\bf 47.8}&{\bf 66.4} \\
			%				&&&& Cluster-NMS$_{S+D} (W)$&&   27.1   &   36.9   && {\bf30.3} &   49.0   &   31.6 && 9.6&            32.8&   {\bf50.8}  &&27.6&    41.3   &{\bf 43.8}&&  19.5   &{\bf 47.8}&{\bf 66.4} \\
			\hline
		\end{tabular}
		
		% \vspace{-0.05in}
		\label{tab:clusternms1}
	\end{spacing}
\end{table*}

\begin{table*}\footnotesize
	\centering
	\setlength{\tabcolsep}{2pt}
	\caption{Instance segmentation results of YOLACT \cite{yolact}. The model is borrowed from the original paper \cite{yolact}, and the results are reported on MS COCO 2017 validation set \cite{coco}.}
	\newcommand{\modelname}[1]{\methodname{}-#1}
	\begin{spacing}{1.2}
		\begin{tabular}{l c| l |cc| cc cc| ccc c ccc c |ccc c ccc}
			\hline
			Method && Backbone               && NMS Strategy &&    FPS     &     Time   &&  AP  & AP$_{50}$ & AP$_{75}$ &&  AP$_{S}$ &  AP$_{M}$ &  AP$_{L}$ && AR$_1$ & AR$_{10}$ & AR$_{100}$ && AR$_{S}$ & AR$_{M}$ &  AR$_{L}$ \\
			\hline
			\multirow{5}{*}{YOLACT-550 \cite{yolact}} && \multirow{5}{*}{R-101-FPN}
			&&    Fast NMS      &&{\bf 30.6}&{\bf 32.7}&&   29.8   &   48.3   &   31.3   &&   10.1   &    32.2  &    50.1  &&   27.8   &    39.6  &   40.8   &&   18.9   &   44.8   &    61.0   \\
			&&&&  Original NMS  &&   11.9   &   83.8   &&   29.9   &   48.4   &   31.4   &&   10.0   &    32.3  &    50.3  &&   27.7   &    40.4  &   42.1   &&   19.5   &   46.4   &    62.8   \\
			\cline{4-24}
			&&&&  Cluster-NMS    &&   29.2   &   34.2   &&   29.9   &   48.4   &   31.4   &&   10.0   &    32.3  &    50.3  &&   27.7   &    40.4  &   42.1   &&   19.5   &   46.4   &    62.8   \\
			%				&&&& Cluster-NMS (D) &&   28.3   &   35.3   &&   29.8   &   48.1   &   31.3   &&   9.9    &    32.2  &    50.2  &&   27.7   &    40.4  &   42.2   &&   19.6   &   46.4   &    63.2   \\
			&&&& Cluster-NMS$_{S}$ &&   28.8   &   34.7   &&{\bf 30.5}&{\bf 49.3}&{\bf 32.1}&&{\bf 10.3}&{\bf 33.1}&{\bf 51.2}&&{\bf 27.8}&{\bf 41.8}&   44.1   &&   20.4   &   48.3   &    66.3   \\
			&&&& Cluster-NMS$_{S+D}$&&   27.5   &   36.4   &&   30.4   &   49.1   &   32.0   &&   10.2   &    32.9  &    51.1  &&{\bf 27.8}&    41.7  &{\bf 44.3}&&{\bf 20.5}&{\bf 48.5}&{\bf 66.8} \\
			\hline
		\end{tabular}
		
		% \vspace{-0.05in}
		\label{tab:clusternms2}
	\end{spacing}
\end{table*}

\begin{figure*}[!htb]\footnotesize
	\centering
	\includegraphics[width=0.85\textwidth]{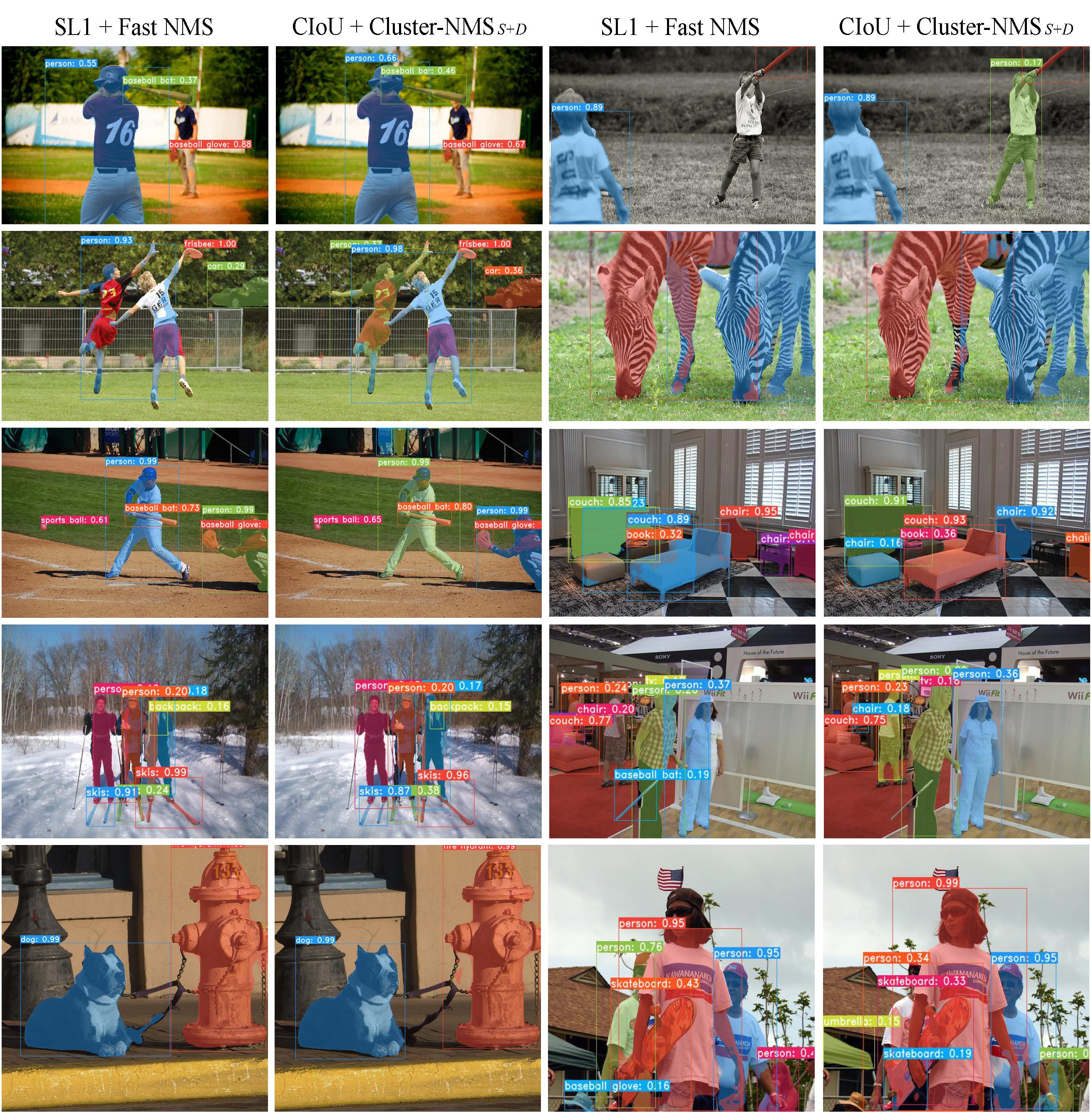}
	\caption{Detection and segmentation results of YOLACT \cite{yolact} on MS COCO 2017.}
	\label{fig:yolact}
\end{figure*}
\subsection{Instance Segmentation}
\subsubsection{YOLACT}

YOLACT \cite{yolact} is a real-time instance segmentation method, in which Smooth-$\ell_1$ loss is adopted for training, and Fast NMS is used for real-time inference.
To make a fair comparison, we train two YOLACT\footnote{\url{https://github.com/dbolya/yolact}} (ResNet-101-FPN) models using Smooth-$\ell_1$ loss and CIoU loss, respectively.
The training is carried out on NVIDIA GTX 1080Ti GPU with batchsize 4 per GPU.
The training dataset is MS COCO 2017 train set, and the testing dataset is MS COCO 2017 validation set.
The evaluation metrics include AP, AR, inference time (ms) and FPS.
The details of different settings of AP and AR can be found in \cite{SSD}.

The comparison results are reported in Table \ref{tab:clusternms1}. 
By adopting the same NMS strategy, i.e., Fast NMS, one can see that CIoU loss is superior to Smooth-$\ell_1$ loss in terms of most AP and AR metrics. 
Then on the YOLACT model trained using our CIoU loss, we evaluate the effectiveness of Cluster-NMS by assembling different geometric factors.
One can see that:
(i) In comparison to original NMS, Fast NMS is efficient but yields notable drops in AP and AR, while our Cluster-NMS can guarantee exactly the same results with original NMS and its efficiency is comparable with Fast NMS.
(ii) By enhancing score penalty based on overlap areas, Cluster-NMS$_S$ achieves notable gains in both AP and AR.
Especially, for large objects, Cluster-NMS$_S$ performs much better in AP$_L$ and AR$_L$.
The hard threshold strategy in original NMS is very likely to treat large objects with occlusion as redundancy, while our Cluster-NMS$_S$ is friendly to large objects.
(iii) By further incorporating distance, Cluster-NMS$_{S+D}$ achieves higher AR metrics, albeit its AP metrics are only on par with Cluster-NMS$_S$.
From Fig. \ref{fig:DmNMS}, one can see that distance is a crucial factor for balancing precision and recall, and we choose $\beta=0.6$ in our experiments.
(iv) Incorporating geometric factors into Cluster-NMS takes only a little more inference time, and $\sim$28 FPS on one GTX 1080Ti GPU can guarantee real-time inference.
To sum up, our Cluster-NMS with geometric factors contributes to significant performance gains, while maintaining high inference efficiency.

One may notice that our re-trained YOLACT model using Smooth-$\ell_1$ loss is a little inferior to the results reported in their original paper \cite{yolact}.
This is because batchsize in \cite{yolact} is set as 8, which causes out of memory on our GTX 1080Ti GPU.
%	\textcolor{blue}{And thus they can make the pretrained batch norm unfrozen, but we can not.}
%
Then, we also evaluate Cluster-NMS on their released models trained using Smooth-$\ell_1$ loss.
From {Table \ref{tab:clusternms2}}, one can draw the consistent conclusion that Cluster-NMS with geometric factors contributes to AP and AR improvements.
Considering Tables \ref{tab:clusternms1} and \ref{tab:clusternms2}, Cluster-NMS$_{S+D}$ is a better choice to balance precision and recall.
From Fig. \ref{fig:DmNMS}, it is easy to see that Cluster-NMS$_{S+D}$ can achieve higher precision by setting larger $\beta$, or can achieve higher recall by setting smaller $\beta$. 
Finally, we present the results of detection and segmentation in Fig. \ref{fig:yolact}, from which one can easily find the more accurate detected boxes and segmentation masks by our CIoU loss and Cluster-NMS$_{S+D}$.

\begin{figure}[!t]\footnotesize
	\centering
	\begin{tabular}{cccccccccc}
		\includegraphics[width=0.36\textwidth]{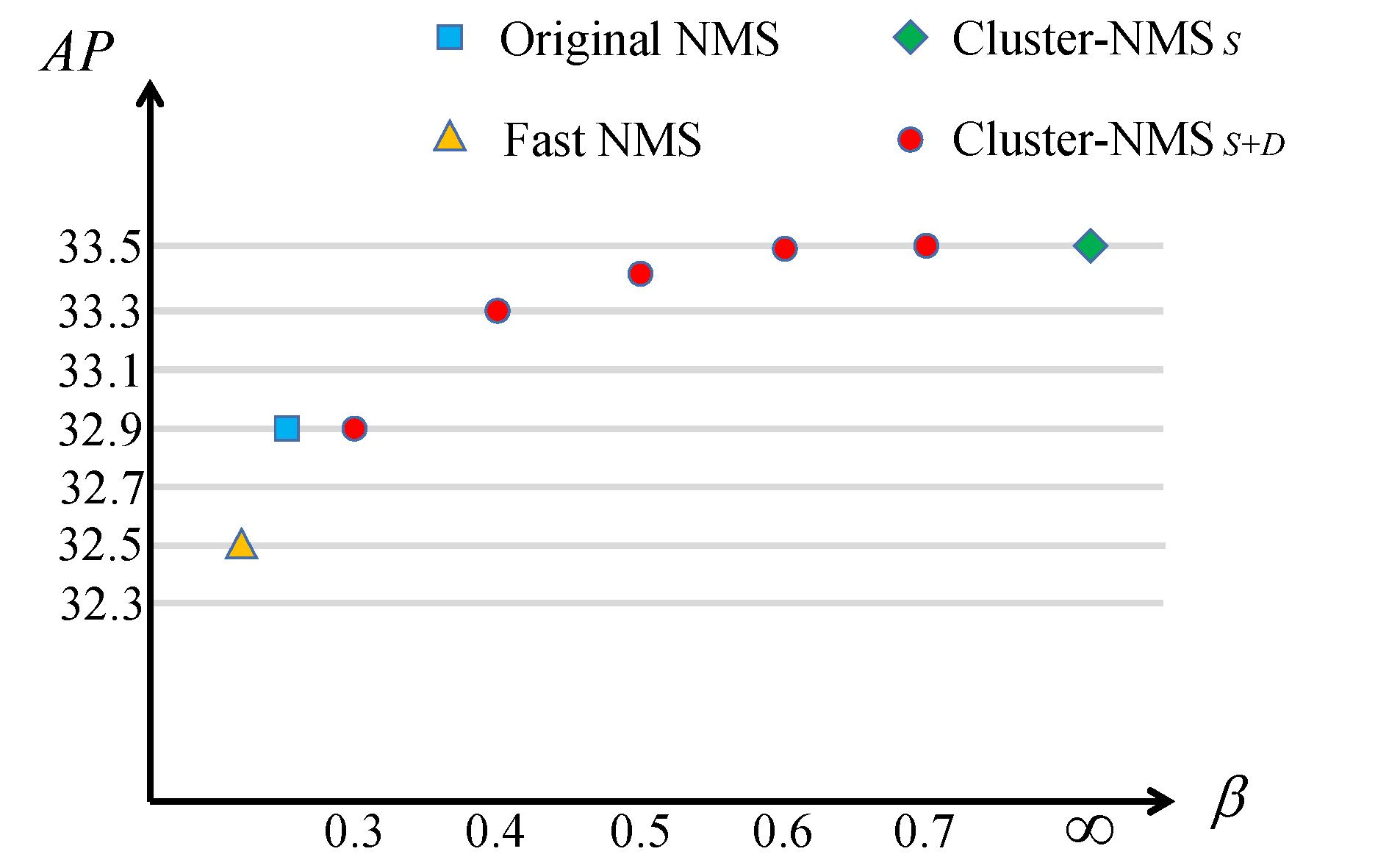}\\
		(a) Average Precision with different $\beta$ values\\
		\\
		\includegraphics[width=0.36\textwidth]{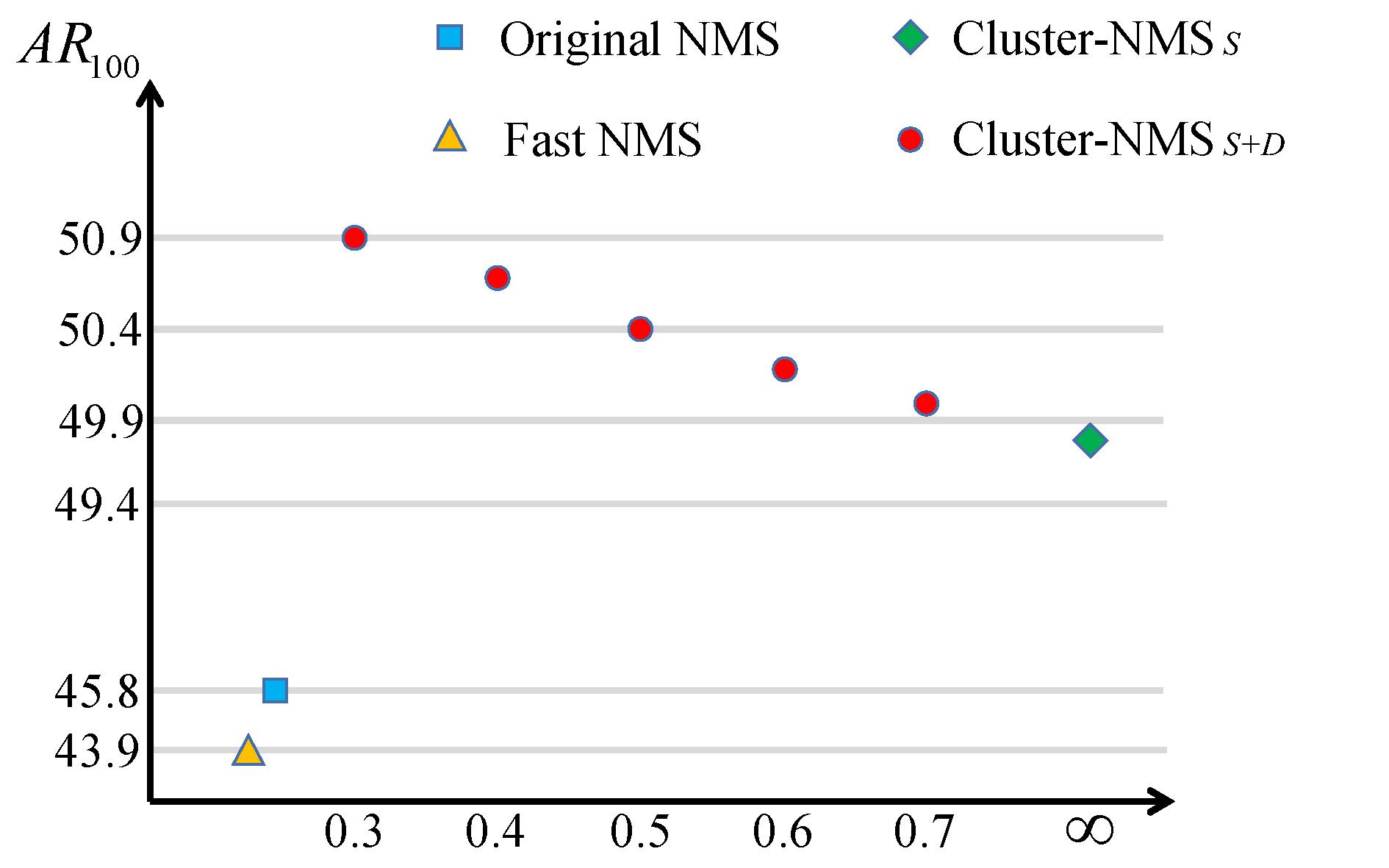}\\
		(b) Average Recall with different $\beta$ values\\
	\end{tabular}
	\caption{Balancing precision and recall by different values of $\beta$ in Cluster-NMS$_{S+D}$. The results are from YOLACT model on MS COCO 2017 validation set.}
	\label{fig:DmNMS}
\end{figure}

\subsubsection{BlendMask-RT}
We then evaluate our CIoU loss and Cluster-NMS on the most state-of-the-art instance segmentation method BlendMask-RT \cite{blendmask}, which is based on YOLACT by introducing attention mechanism.
To make a fair comparison, we train two BlendMask-RT\footnote{\url{https://github.com/aim-uofa/AdelaiDet}} (ResNet-50-FPN) models using GIoU loss and CIoU loss, respectively.
In experiments, we adopt the same training and testing sets with YOLACT and also the same evaluation measures.
Table \ref{tab:blendsegmentation} reports the quantitative results of BlendMask-RT, in which we can draw the same conclusion that CIoU loss is effective for training, and Cluster-NMS with geometric factors can further improve the performance, while maintaining high inference efficiency.
	\begin{table*}\footnotesize
		\centering
		\setlength{\tabcolsep}{1.5pt}
		\caption{Instance segmentation results of BlendMask-RT \cite{blendmask}. The models are re-trained using GIoU loss and CIoU loss by us, and the results are reported on MS COCO 2017 validation set. }
		\newcommand{\modelname}[1]{\methodname{}-#1}
		\begin{spacing}{1.2}
			\begin{tabular}{c c| c |cc| cc cc| ccc c ccc c |ccc c ccc}
				\hline
				Method && Loss               && NMS Strategy &&    FPS     &     Time   &&  AP  & AP$_{50}$ & AP$_{75}$ &&  AP$_{S}$ &  AP$_{M}$ &  AP$_{L}$ && AR$_1$ & AR$_{10}$ & AR$_{100}$ && AR$_{S}$ & AR$_{M}$ &  AR$_{L}$ \\
				\hline
				&& $\mathcal{L}_{GIoU}$ &&  Fast NMS    && {\bf 42.7}&{\bf 23.4}   && 34.4 & 54.3 & 36.3 && {\bf14.4} & 38.3 & 51.9 && 29.3 & 45.2 & 46.8 && 25.6 & 53.0 & 63.4 \\
				\cline{3-24}
				&&\multirow{5}{*}{$\mathcal{L}_{CIoU}$}
				&&    Fast NMS      &&{\bf 42.7}&{\bf 23.4}&& 34.9 & 55.2 & 37.0 && 14.1 & 38.9 & 51.9 && 29.6 & 45.3 & 47.0 && 25.5 & 53.2 & 63.8 \\
				\multirow{1}{*}{BlendMask-RT}&&&&  Original NMS (TorchVision)  && {\bf 42.7}&{\bf 23.4}  && 35.1 & {\bf55.4} & 37.2 && 14.2 & 39.1 & 52.2 && 29.6 & 46.3 & 48.5 && 26.4 & 55.0 & 65.9 \\
				\cline{4-24}
				\multirow{2}{*}{R-50-FPN}
				&&&&  Cluster-NMS    &&   40.7   &   24.6   && 35.1 & {\bf55.4} & 37.2 && 14.2 & 39.1 & 52.2 && 29.6 & 46.3 & 48.5 && 26.4 & 55.0 & 65.9 \\
				%				&&&& Cluster-NMS$_D$ &&   27.9   &   35.8   &&   32.5   &   52.3   &   34.1   &&   14.6   &    34.7  &    48.6  &&   28.5   &    43.0  &   45.5   &&   23.3   &   48.4   &    65.2   \\
				&&&& Cluster-NMS$_W$ &&   39.7   &   25.2   && {\bf35.2} & {\bf55.4} & 37.3 && 14.3 & {\bf39.2} & {\bf52.5} && {\bf29.7} & {\bf46.4} & 48.7 && 26.4 & 55.2 & 66.2 \\
				&&&& Cluster-NMS$_{W+D}$&&  38.5  &  26.0   && {\bf35.2} & 55.3 & {\bf37.4} && 14.3 & {\bf39.2} & {\bf52.5} && {\bf29.7} & {\bf46.4} & {\bf48.8} && {\bf26.5} & {\bf55.3} & {\bf66.4} \\
				%				&&&& Cluster-NMS$_{S+D} (W)$&& 27.1 & 36.9  &&{\bf 33.4}&   52.8   &{\bf 35.5}&&{\bf 15.2}&{\bf 35.6}&{\bf 50.1}&&{\bf 28.9}&{\bf 45.3}&   49.1   &&{\bf 25.4}&   52.3   &    70.7   \\
				\hline
			\end{tabular}
			
			% \vspace{-0.05in}
			\label{tab:blendsegmentation}
		\end{spacing}
	\end{table*}
\subsection{Object Detection}
For object detection, YOLO v3, SSD and Faster R-CNN are adopted for evaluation.

\subsubsection{YOLO v3 \cite{yolov3}}
First, we evaluate CIoU loss in comparison to MSE loss, IoU loss and GIoU loss on PASCAL VOC 2007 test set \cite{voc}.
YOLO v3 is trained on PASCAL VOC 07+12 (the union of VOC 2007 trainval and VOC 2012 trainval).
The backbone network is Darknet608.
%The training for each loss function shares the same parameter settings.
We follow exactly the GDarknet\footnote{\url{https://github.com/generalized-iou/g-darknet}} training protocol released from \cite{giou}.
Original NMS is adopted during inference.
The performance for each loss has been reported in Table~\ref{table:yolo-voc}.
Besides AP metrics based on IoU, we also report the evaluation results using AP metrics based on GIoU.
As shown in Table~\ref{table:yolo-voc}, GIoU as a generalized version of IoU indeed achieves a certain degree of performance improvement.
DIoU loss only includes distance in our pioneer work \cite{diou}, and can improve the performance with gains of 3.29\% AP and 6.02\% AP75 using IoU as evaluation metric.
CIoU loss takes the three important geometric factors into account, which brings an amazing performance gains, i.e., 5.67\% AP and 8.95\% AP75.
%
%From Fig. \ref{fig:yolov3-examples}, one can see that detected boxes by CIoU loss are more accurate than those by GIoU loss.
%Note that CIoU will break the range of $[-1,1]$, but for a excellent anchor-based method, the extreme scenarios are almost negligible.
%	Finally, CIoU loss combined with DIoU-NMS brings marvelous improvements of 5.91\% AP and 9.88\% AP75.
%
Also in terms of GIoU metric, we can draw the same conclusion, validating the effectiveness of CIoU loss.
%
%	We note that GIoU metric is actually consistent with IoU metric, and thus we only report the IoU metric for the following experiments.

\begin{table}[!t]\footnotesize
	
	\setlength{\tabcolsep}{8pt}
	\centering
	\caption{Quantitative comparison of YOLOv3~\cite{yolov3} trained using different losses. The results are reported on the test set of PASCAL VOC 2007. }
	\begin{tabular}{c c c c c c}
		\hline
		\raisebox{-1.5ex}{Loss / Evaluation} & \multicolumn{2}{c}{AP} &&\multicolumn{2}{c}{AP75}\\ [0.5ex]
		\cline{2-3}\cline{5-6}
		& IoU & GIoU &&IoU & GIoU \\
		\hline
		MSE   & 46.1 & 45.1  && 48.6 & 46.7 \\
		%			\hline
		$\calL_{IoU}$ & 46.6 & 45.8  && 49.8 & 48.8 \\
		%			\hline
		%			\hline
		$\calL_{GIoU}$ & 47.7 & 46.9 && 52.2 & 51.1 \\
		%			Relative improv. \% & 3.47\% & 3.99\% && 7.41\% & 9.42\% \\
		%			\hline
		%			$\calL_{RIoU\text{-}d}(0.5)$ & 48.06 & 47.18 && 52.39 & 50.90 \\
		%			Relative improv \% & 3.20\% & 2.61\% && 5.16\% & 4.39\% \\
		%			\hline
		$\calL_{DIoU}$ & 48.1 & 47.4 && 52.8 & 51.9 \\
		%			Relative improv. \% & 4.34\% & 5.10\% && 8.64\% & 11.13\% \\
		%			\hline
		%			$\calL_{RIoU\text{-}exp\text{-}d}(0.5)$ & 48.49 & 47.71 && 52.64 & 51.57 \\
		%			Relative improv \% & 4.12\% & 4.12\% && 5.66\% & 5.76\% \\
		%			\hline
		%			$\calL_{FDIoU}$ & 48.49 & 47.71 && 79.50 & 78.94 &&  52.64 & 51.57 \\
		%			Relative improv \% & 4.12\% & 4.12\% && 3.72\% & 3.70\% && 5.66\% & 5.76\% \\
		%			\hline
		%			$\calL_{RIoU\text{-}dar}(0.5)$ & 48.76 & 47.93 && 54.05 & 52.67 \\
		%			Relative improv \% & 4.70\% & 4.60\% && 8.49\% & 8.02\% \\
		\hline
		$\calL_{CIoU}$ & \textbf{49.2} & \textbf{48.4} && \textbf{54.3} & \textbf{52.9} \\
		%			Relative improv. \% & 6.72\% & 7.32\% && 11.73\% & 13.28\% \\
		%			\hline
		%			$\calL_{CIoU}(D) $ & 49.32 & 48.54 && 54.74 & 53.30 \\
		%			Relative improv. \% & 5.91\% & 5.94\% && 9.88\% & 9.31\% \\
		\hline
	\end{tabular}
	\label{table:yolo-voc}
	
\end{table}

	\begin{table*}[!t]\footnotesize
		\centering
		\setlength{\tabcolsep}{2.5pt}
		\caption{Comparison of different NMS methods on pre-trained Pytorch-YOLO v3 model. The results are reported on MS COCO 2017 validation set. }
		\newcommand{\modelname}[1]{\methodname{}-#1}
		\begin{spacing}{1.2}
			\begin{tabular}{c c| c| cc cc| ccc c ccc| c ccc c ccc}
				\hline
				Method && NMS Strategy &&    FPS     &     Time   &&  AP  & AP$_{50}$ & AP$_{75}$ &&  AP$_{S}$ &  AP$_{M}$ &  AP$_{L}$ && AR$_1$ & AR$_{10}$ & AR$_{100}$ && AR$_{S}$ & AR$_{M}$ &  AR$_{L}$ \\
				\hline
				\multirow{8}{*}{YOLO v3}
				&&    Fast NMS      && \bf{71.9} & \bf{13.9} &&   42.7   &   63.0   &   45.8   &&   27.9   &    47.8  &    53.2  &&   34.7   &    56.4  &   60.1   &&   44.7   &   64.8   &    73.4   \\
				&&  Original NMS    &&   9.6   &   103.9   &&   43.2   &   63.2   &   46.5   &&   28.3   &    48.4  &    53.7  &&   34.7   &    57.7  &   62.7   &&   46.9   &   67.8   &    75.8   \\
				&&  Original NMS (TorchVision)&&69.0&14.5  &&   43.2   &   63.2   &   46.5   &&   28.3   &    48.4  &    53.7  &&   34.7   &    57.7  &   62.7   &&   46.9   &   67.8   &    75.8   \\
				&& Weighted-NMS        &&   6.2   &  162.3    && 43.6  & \bf{63.4} & 47.4   &&   \bf{29.0}   &    48.9  &    53.7  && \bf{34.9} &    58.1  &   63.0   &&   47.6   &   68.3   &    75.5   \\
				\cline{3-22}
				&&  Cluster-NMS    &&   65.4   &   15.3   &&   43.2   &   63.2   &   46.5   &&   28.3   &    48.4  &    53.7  &&   34.7   &    57.7  &   62.7   &&   46.9   &   67.8   &    75.8   \\
				&& Cluster-NMS$_D$  &&   60.6   &   16.5   &&   43.3   &   63.0   &   47.1   &&   28.4  &  48.6  &  53.8  &&   34.7   &    58.1  &   63.6   &&   47.6   &   68.7   &    76.7   \\
				&& Cluster-NMS$_W$  &&   57.1   &   15.8   && 43.6  & 63.3 & 47.5   &&   \bf{29.0}  &   48.9  &  53.7  && \bf{34.9} &  58.4  &  63.8  &&   48.0   &   69.1   &    76.0  \\
				&& Cluster-NMS$_{W+D}$ &&   53.5   &   17.1   && \bf{43.8}  & 63.0 & \bf{47.9}  &&  \bf{29.0}  &  \bf{49.0}  &  \bf{54.0}  && \bf{34.9} & \bf{58.8} & \bf{64.6} && \bf{48.7} & \bf{69.8} & \bf{77.1} \\
				\hline
			\end{tabular}
			
			% \vspace{-0.05in}
			\label{tab:yolonms}
		\end{spacing}
	\end{table*}

%\begin{figure*}[!htb]
%	\centering
%	\includegraphics[width=1\textwidth]{resultsyolo.png}
%	\caption{Detection examples from YOLO v3 \cite{yolov3} trained on PASCAL VOC 07+12.}
%	\label{fig:yolov3-examples}
%\end{figure*}

Since YOLO v3 with GDarknet is implemented using C/C++, it is not suitable for evaluating Cluster-NMS.
Then we evaluate Cluster-NMS on a pre-trained YOLO v3 model in Pytorch, where the model YOLOv3-spp-Ultralytics-608\footnote{\url{https://github.com/ultralytics/yolov3}} achieves much better performance than the original paper of YOLO v3 \cite{yolov3} on MS COCO 2014 validation set.
Table \ref{tab:yolonms} reports the comparison of different NMS methods.
We note that original NMS (TorchVision) is the most efficient due to CUDA implementation and engineering acceleration in TorchVision.
Our Cluster-NMS is only a little slower than original NMS (TorchVision).
By merging coordinates based on overlap areas and scores, Weighted-NMS achieves higher AP and AR than original NMS, but it dramatically lowers the inference speed, making it infeasible for real-time application.
Our Cluster-NMS$_W$ can guarantee the same results with Weighted-NMS, but is much more efficient.
Our Cluster-NMS$_{W+D}$ contributes to further improvements than Cluster-NMS$_W$, especially in terms of AR.
Actually, Cluster-NMS can be further accelerated by logic operations on binarized IoU matrix, but it makes infeasible for incorporating other geometric factors.
Also our Cluster-NMS with geometric factors can still guarantee real-time inference.
%	\textcolor{blue}{Notice that there is an interesting approach,  even if we first use TorchVision NMS to get the output vector $\mathcal{D}$, then convert $\mathcal{D}$ to $E$ and multiply the initial IoU matrix $X$ to get $C$. Can this approach save time? Unfortunately, the answer is not. Table 7 reports more precise speed comparison between this approach and our Weighted Cluster-NMS. One can see that the model inference costs 7.9ms which is the same to all the NMS strategies. TorchVision NMS costs 1.9ms in the post-processing. Fast NMS costs 2.2ms and Matrix-NMS costs 3.5ms. Note that the IoU matrix calculation and upper triangularization are necessary. Thus the approach will not save time.}
%
%	We hope that our Cluster-NMS can be implemented into the mainstream detection and segmen\texttt{}tation framework to get a satisfying performance and keep high efficiency.

\subsubsection{SSD \cite{SSD}}
We use another popular one-stage method SSD to further conduct evaluation experiments.
The latest PyTorch implementation of SSD\footnote{\url{https://github.com/JaryHuang/awesome_SSD_FPN_GIoU}} is adopted.
% and its detection performance is analogous to the original SSD.
Both the training set and testing set share the same setting with YOLO v3 on PASCAL VOC.
%	Following the default training protocol, the max iteration is set to 120K.
The backbone network is ResNet-50-FPN.
%
%	The default bounding box regression loss is smooth $\ell_1$-norm, which has different magnitudes with IoU-based losses.
%	And thus there should be a more appropriate trade-off weight for the regression loss to balance with the classification loss.
%	We have observed that for dense anchor algorithms, increasing the regression loss properly can improve the performance.
%	Therefore, for a fair comparison,  we fix the weight on regression loss as 5 for these IoU-based losses.
And then we train the models using IoU, GIoU, DIoU and CIoU losses.
% and search the best network model for each loss.
%The original bounding box regression loss is smooth $\ell_1$-norm, and there are different weights for these IoU-based losses for regression.
%The $\gamma$ in Eqn. \eqref{eq:detectoin loss} should be accordingly set.
%We have observed that for dense anchor algorithms, increasing the regression loss properly can improve the performance, and we fix $\gamma$ to 5 for a fair comparison.
%However, because the penalty term is a small quantity, for examples, \textcolor{blue}{when $\gamma$ is fixed to 5 for GIoU loss, the $\gamma$ does not exceed 5.3 for IoU loss; and if $\gamma$ is fixed to 12 for GIoU loss, the $\gamma$ generally does not exceed 12.5 for IoU loss.} Therefore, for fair comparison, we fix the maximum $\gamma$ to 5.
Table \ref{table:SSD-VOC} gives the quantitative comparison, in which AP metrics based on IoU and evaluation of NMS methods are reported.
As for loss function, we can see the consistent improvements of CIoU loss against IoU, GIoU and DIoU losses.
As for NMS, Cluster-NMS$_{W+D}$ leads to significant improvements in AP metrics, and its efficiency is still well maintained.

\begin{table}[!t]\footnotesize
	
	\setlength{\tabcolsep}{18pt}
	\centering
	
	\caption{Quantitative comparison of SSD~\cite{SSD} for evaluating different loss functions and NMS methods. The results are reported on the test set of PASCAL VOC 2007.}
	
	\begin{tabular}{c|ccc}
		\hline
		{Loss / Evaluation} & AP  & AP$_{75}$ \\
		\hline
		%			\hline
		$\calL_{IoU}$& 51.0  & 54.7  \\
		%			\hline
		%			\hline
		$\calL_{GIoU}$& 51.1  & 55.4  \\
		%			Relative improv. \% & 0.10\%  & 1.35\% \\
		%			\hline
		$\calL_{DIoU}$ & 51.3  & 55.7  \\
		%			Relative improv. \% & 0.59\%  & 1.77\%  \\
		\hline
		%			$\calL_{FDIoU}$ & 51.19 & 78.82 & 55.78  \\
		%			Relative improv. \% & 0.35\% & 0.32\% & 1.90\% \\
		%			\hline
		%			$\calL_{RIoU\text{-}dar}$ & 51.30 & \textbf{78.90} & 55.98  \\
		%			Relative improv. \% & 0.57\% & \textbf{0.42\%} & 2.27\% \\
		%			\hline
		$\calL_{CIoU}$ & \textbf{51.5}  & \textbf{56.4}  \\
		%			Relative improv. \% & 0.96\%  & 2.96\% \\
		\hline
		%$\calL_{CIoU}+DAM$ & 51.65  & 56.77  \\
		%Relative improv. \% & 1.25\%  & 3.71\% \\
		%\hline
		%			$\calL_{CIoU}(D)$  & 51.92 & 56.57  \\
		%			Relative improv. \% & 1.78\%  & 3.34\% \\
	\end{tabular}\\
	\vspace{0.03in}	
	(a) Comparison of different loss functions.
	\vspace{0.1in}
	
	\setlength{\tabcolsep}{10pt}
	\centering
	\begin{spacing}{1.2}
		\begin{tabular}{c|cc|ccc}
			\hline
			NMS Strategy        &  FPS     & Time  &   AP         &   AP$_{75}$          \\
			\hline
			Fast NMS       & \bf{28.8}     & \bf{34.7} & 50.7           &  56.2         \\
			Original NMS   & 17.8     & 56.1      & 51.5           &  56.4         \\
			\hline
			%				DIoU-NMS       & 51.9      & 78.6     &  56.6    & 11.4     & 87.6       \\
			Cluster-NMS     & 28.0     & 35.7    & 51.5          &  56.4         \\
			%				Cluster-NMS (D) & 51.9      & 78.6     &  56.6    & 27.7     & 36.1       \\
			Cluster-NMS$_W$     & 26.8     & 37.3     & 51.9           &  56.3          \\
			Cluster-NMS$_{W+D}$ & 26.5     & 37.8     & \bf{52.4}           &  \bf{57.0}         \\
			\hline
		\end{tabular}
	\end{spacing}
	
	(b) Comparison of NMS methods on the model trained by $\mathcal{L}_{CIoU}$.
	\label{table:SSD-VOC}
\end{table}

%	\begin{table}[h]\footnotesize
%		\setlength{\tabcolsep}{6pt}
%		\centering
%		\caption{Quantitative comparison of NMS strategies on SSD~\cite{SSD} trained using $\calL_{CIoU}$. The results are reported on the test set of PASCAL VOC 2007. (D) denotes using DIoU for NMS.}
%		\begin{spacing}{1.2}
%			\begin{tabular}{c|ccccc}
%				\hline
%				NMS Strategy        &   AP      &   AP50   &   AP75   &  FPS     & Time       \\
%				\hline
%				Fast NMS       & 50.7      & 77.3     &  56.2    & \bf{28.8}     & \bf{34.7}      \\
%				Original NMS   & 51.5      & 78.8     &  56.4    & 17.8     & 56.1       \\
%				\hline
%%				DIoU-NMS       & 51.9      & 78.6     &  56.6    & 11.4     & 87.6       \\
%				Cluster-NMS     & 51.5      & 78.8     &  56.4    & 28.0     & 35.7       \\
%%				Cluster-NMS (D) & 51.9      & 78.6     &  56.6    & 27.7     & 36.1       \\
%				Weighted Cluster-NMS     & 51.9      & \bf{79.1}     &  56.3    & 26.8     & 37.3       \\
%				Weighted Cluster-NMS (D) & \bf{52.4}      & 78.7     &  \bf{57.0}    & 26.5     & 37.8       \\
%				\hline
%			\end{tabular}
%			\label{table:SSDNMS}
%		\end{spacing}
%	\end{table}

%\begin{figure*}[!t]\footnotesize
%	\centering
%	\setlength{\tabcolsep}{1pt}
%	\begin{tabular}{cccccccccc}
%		\includegraphics[width=1\textwidth]{resultscoco}
%		%		 $\mathcal{L}_{GIoU}$  & $\mathcal{L}_{CIoU}$
%	\end{tabular}
%	\caption{Detection examples from Faster R-CNN \cite{fasterrcnn} trained on MS COCO 2017.}
%	\label{fig:fasterrcnn-examples}
%\end{figure*}

\subsubsection{Faster R-CNN \cite{fasterrcnn}}
\begin{table}[!htb]
	\footnotesize
	
	\setlength{\tabcolsep}{8pt}
	\centering
	\caption{Quantitative comparison of Faster R-CNN~\cite{fasterrcnn} trained using $\calL_{IoU}$ (baseline), $\calL_{GIoU}$, $\calL_{DIoU}$ and $\calL_{CIoU}$.
		Cluster-NMS$_{W+D}$ is applied on the model trained using $\calL_{CIoU}$, while the other results are produced by Original NMS.
		The results are reported on the validation set of MS COCO 2017.}
	\begin{tabular}{c|cc|ccc}
		\hline
		\multirow{2}{*}{Loss / Evaluation} & \multirow{2}{*}{AP} & \multirow{2}{*}{AP$_{75}$}& \multirow{2}{*}{AP$_S$} & \multirow{2}{*}{AP$_M$} & \multirow{2}{*}{AP$_L$}  \\
		&&&&&  \\
		\hline
		%			\hline
		$\calL_{IoU}$& 37.9  & 40.8 & 21.6 & 40.8 & 50.1 \\
		%			\hline
		%			\hline
		$\calL_{GIoU}$& 38.0 & 41.1 & 21.5 & 41.1 & 50.2 \\
		%			Relative improv. \% & 0.26\%  & 0.74\% & \text{-}0.46\% & 0.74\% & 0.20\% \\
		%			\hline
		$\calL_{DIoU}$ & 38.1  & 41.1 & \textbf{21.7} & 41.2 & 50.3 \\
		%			Relative improv. \% & 0.53\%  & 0.74\% & 0.46\% & 0.98\% & 0.40\% \\
		%			\hline
		%		$\calL_{FDIoU}$ & 38.07 & \textbf{58.51} & 41.43 & 21.32 & 41.11 & 50.97 \\
		%		Relative improv. \% & 0.37\% & \textbf{0.45\%} & 1.57\% & \text{-}1.20\% & 0.71\% & 1.66\% \\
		%		\hline
		%		$\calL_{RIoU\text{-}dar}$ & 38.03 & \textbf{58.51} & 41.10 & 21.45 & 41.24 & 50.22 \\
		%		Relative improv. \% & 0.26\% & \textbf{0.45\%} & 0.76\% & \text{-}0.60\% & 1.03\% & 0.16\% \\
		\hline
		$\calL_{CIoU}$ & 38.7  & 42.0 & 21.3 & 41.9 & 51.5 \\
		%			Relative improv. \% &2.11\%  & 2.94\% & \text{-}1.39\% & 2.70\% & 2.79\% \\
		%			\hline
		%			$\calL_{CIoU}(D)$ & 38.7  & 42.1 & 21.4 & 42.0 & 51.6 \\
		%			Relative improv. \% & 2.11\% &  3.19\% & -0.93\% & 2.94\% & 2.99\% \\
		%			\hline
		%			Cluster-NMS$_W$ & 39.0  & 42.2 & 21.5 & 42.3 & 52.1 \\
		%			Relative improv. \% & 2.11\% &  3.19\% & -0.93\% & 2.94\% & 2.99\% \\
		%			\hline
		Cluster-NMS$_{W+D}$ &\textbf{39.0}  & \textbf{42.3} & \textbf{21.7} & \textbf{42.2} & \textbf{52.1} \\
		%			Relative improv. \% & 2.11\% &  3.19\% & -0.93\% & 2.94\% & 2.99\% \\
		\hline
	\end{tabular}
	\label{table:faster-rcnn-cocoval}
\end{table}
We also evaluate CIoU loss for a two-stage object detection method Faster R-CNN\footnote{\url{https://github.com/generalized-iou/Detectron.pytorch}} on MS COCO 2017 \cite{coco}.
%	MS COCO is a large-scale dataset, containing more than 118K images for training and 5K images for evaluation.
Following the same training protocol of \cite{giou}, we have trained the models using CIoU loss in comparison with IoU, GIoU and DIoU losses.
The backbone network is ResNet-50-FPN.
%	Besides AP and AP75 metrics, the evaluation metrics in terms of large, medium and small scale objects are also included.
%	As for the trade-off weight for regression loss, we set the weight as 12 for all losses for a fair comparison.
% Then, we search the best network model for each loss.
\Tab~\ref{table:faster-rcnn-cocoval} reports the quantitative comparison.
The gains of CIoU loss in AP are not as significant as those in YOLO v3 and SSD.
It is actually reasonable that the initial boxes filtrated by RPN are likely to have overlaps with ground-truth box, and then all DIoU, GIoU and CIoU losses can make good regression.
But due to the complete geometric factors in CIoU loss, the detected bounding boxes will be matched more perfectly, resulting in moderate gains.
%
%From Fig. \ref{fig:fasterrcnn-examples}, the detected boxes by our CIoU loss are more accurate than GIoU loss.
%
As for suppressing redundant boxes, Cluster-NMS$_{W+D}$ is applied on the model trained by CIoU loss, and it brings further improvements for all the evaluation metrics, validating the effectiveness of assembling geometric factors into Cluster-NMS against original NMS.

\subsection{Discussion}
As for CIoU loss, we have verified the effectiveness of three geometric factors when training deep models of object detection and instance segmentation.
One may notice that CIoU loss yields lower metrics for small objects by Faster-RCNN (Table \ref{table:faster-rcnn-cocoval}), small\&median objects by YOLACT (Table \ref{tab:clusternms1}) and small objects by BlendMask-RT (Table \ref{tab:blendsegmentation}).
The reason may be attributed that for small or median objects, the central point
distance is more important than aspect ratio for regression, and the aspect ratio may weaken the effect of normalized distance between the two boxes.
For large objects, both central point distance and aspect ratio are crucial for bounding box regression, thus resulting in consistent improvements for all these models.
For these models, the average precision and recall on all the objects have validated the superiority of our CIoU loss, although there is some leeway for studying aspect ratio to remedy possible adverse effects for small or median objects, which can be left in future work.

As for Cluster-NMS, from Tables \ref{tab:clusternms1}, \ref{tab:blendsegmentation} and \ref{tab:yolonms}, one can see that these state-of-the-art deep models of object detection and instance segmentation can achieve the highest AP and AR metrics by collaborating with our Cluster-NMS, but they may need different Cluster-NMS versions, e.g., Cluster-NMS$_{W+D}$ for YOLO v3 and BlendMask-RT, and Cluster-NMS$_{S+D}$ for YOLACT.
For a given deep model of object detection or instance segmentation, their detected boxes may have specific property and distribution, thus requiring different NMS methods to obtain the best performance.
However, this is not the drawback of our Cluster-NMS, since Cluster-NMS is a general container to accommodate various NMS methods, while maintaining high inference efficiency.
Meanwhile, the central point distance in Cluster-NMS$_{S+D}$ and Cluster-NMS$_{W+D}$ plays a crucial role, verifying the effectiveness of geometric factors during inference.
%
%In future work, we are looking forward to seeing novel NMS method that is suitable for all these deep models, and our Cluster-NMS can also accommodate it.

\section{Conclusion}\label{sec:conclusion}

In this paper, we proposed to enhance geometric factors into CIoU loss and Cluster-NMS for object detection and instance segmentation.
By simultaneously considering the three geometric factors, CIoU loss is better for measuring bounding box regression when training deep models of object detection and instance segmentation.
Cluster-NMS is purely implemented on GPU by implicitly clustering detected boxes, and is much more efficient than original NMS.
Geometric factors can then be easily incorporated into Cluster-NMS, resulting in notable improvements in precision and recall, while maintaining high inference efficiency.
CIoU loss and Cluster-NMS have been applied to the training and inference of state-of-the-art deep object detection and instance segmentation models.
Comprehensive experiments have validated that the proposed methods contribute to consistent improvements of AP and AR, and the high efficiency of Cluster-NMS can guarantee the real-time inference.
CIoU loss and Cluster-NMS can be widely extended to other deep models for object detection and instance segmentation.
%
%	\section{Acknowledgments}
%	This work is supported in part by the National Natural Science Foundation of China (Nos. 91746107 and 61801326) and the Operating Expenses of Basic Scientific Research Projects of the People's Public Security University of China Grant (Nos.2018JKF617 and 2019JKF111).
%	We also thank Prof. Wangmeng Zuo, Zhanjie Song and Jun Wang for their valuable suggestions and favors.

	{             
		\bibliographystyle{IEEEtran}
		\bibliography{rioucite}
	}
\vspace{-5mm}
\begin{IEEEbiography}[{\includegraphics[width=1in,height=1.25in,clip,keepaspectratio]{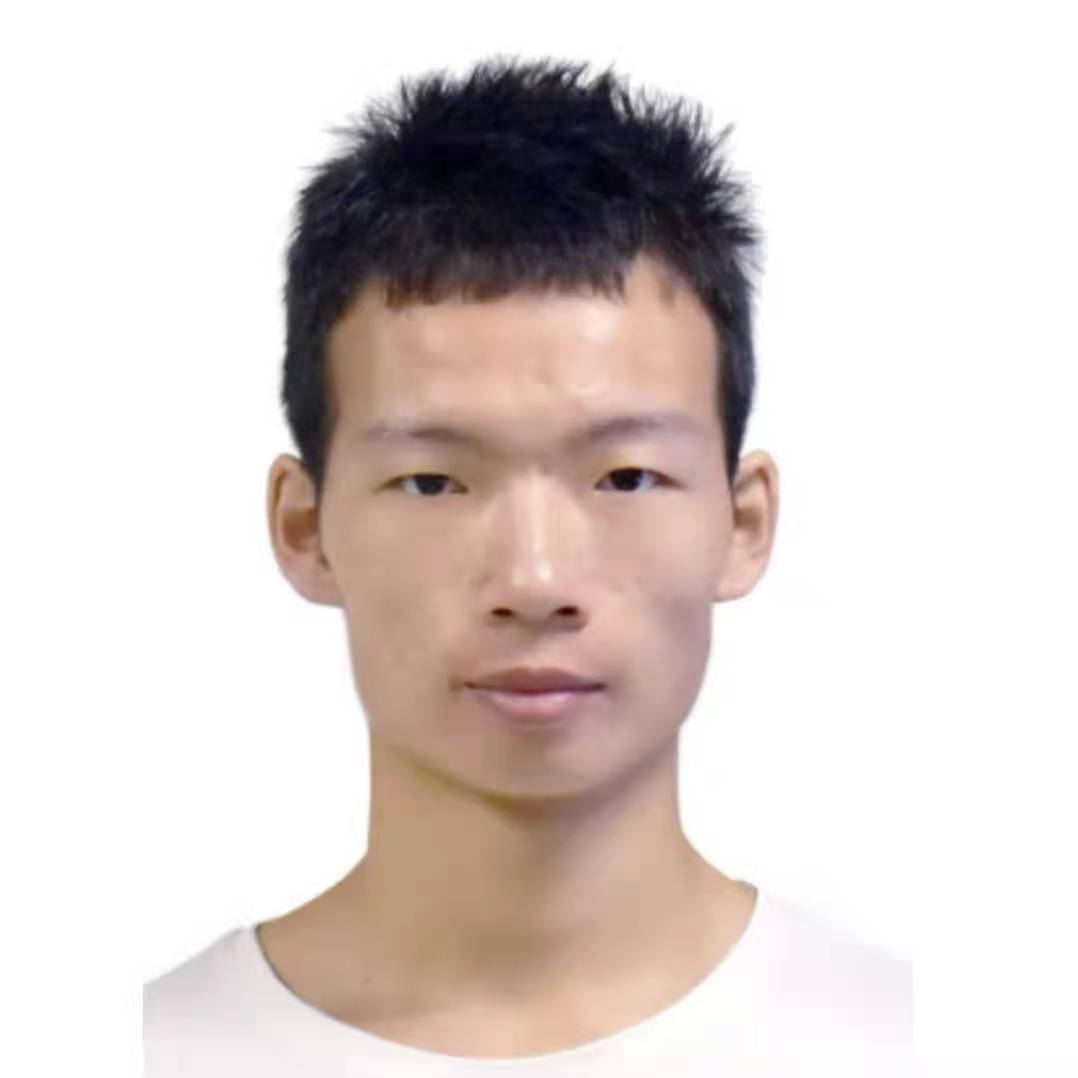}}]{Zhaohui Zheng}
	received the M.S. degree in computational mathematics from Tianjin University in 2021.
	He is currently a Ph.D. candidate with college of computer science at Nankai University, Tianjin, China.
	His research interests include object detection, instance segmentation and scene text detection.
\end{IEEEbiography}
\vspace{-5mm}
\begin{IEEEbiography}[{\includegraphics[width=1in,height=1.25in,clip,keepaspectratio]{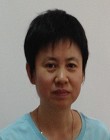}}]{Ping Wang}
    received the B.S., M.S., and Ph.D. degrees in computer science from Tianjin University, Tianjin, China, in 1988, 1991, and 1998, respectively.
    She is currently a Professor with the School of Mathematics, Tianjin University.
    Her research interests include image processing and machine learning.
\end{IEEEbiography}
\vspace{-5mm}
\begin{IEEEbiography}[{\includegraphics[width=1in,height=1.25in,clip,keepaspectratio]{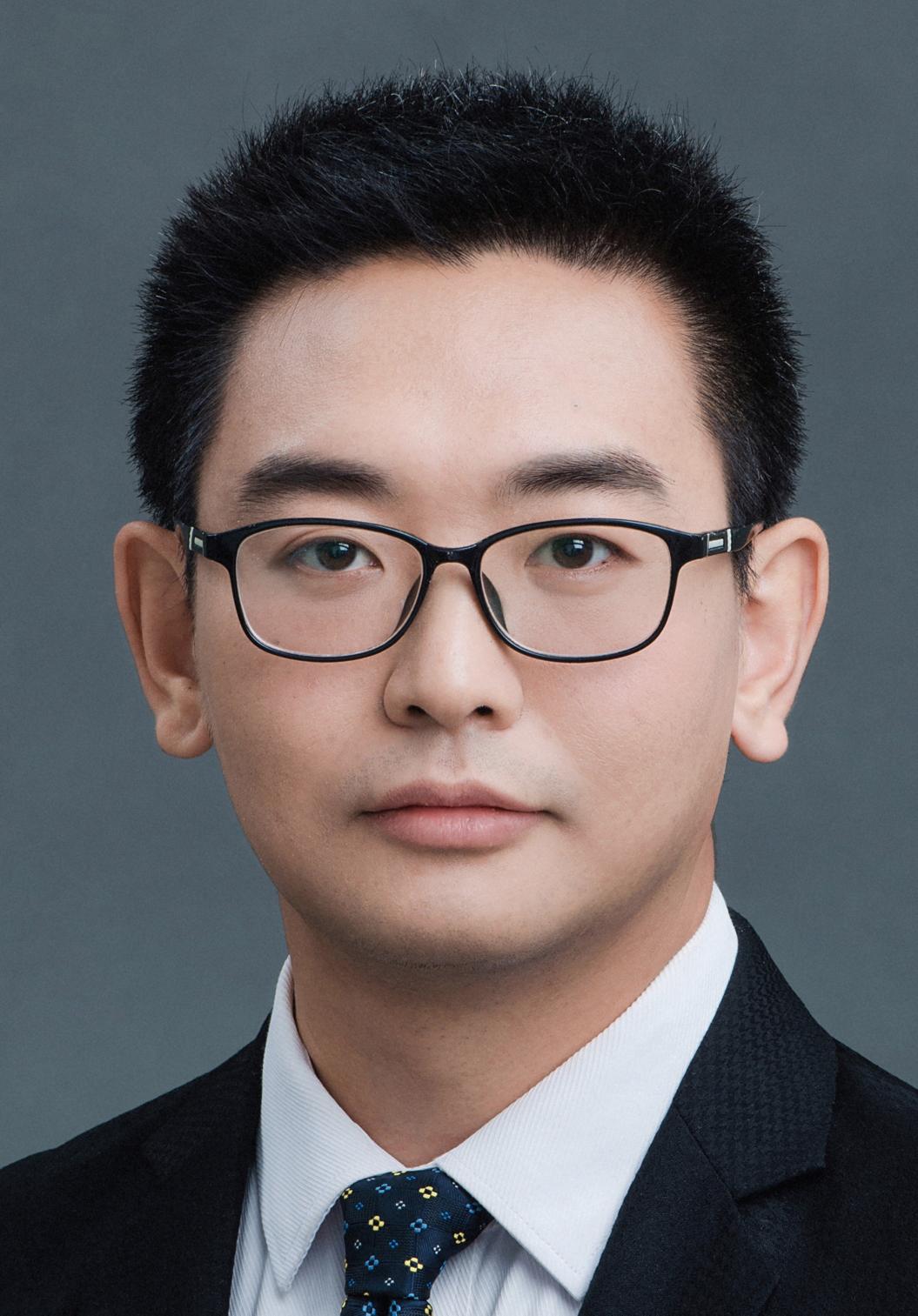}}]{Dongwei Ren}
	%	received the B.S. and M.S. degrees from Harbin Institute of Technology, Harbin, China, in 2011 and 2013, respectively.
	received two Ph.D. degrees in computer application technology from Harbin Institute of Technology and The Hong Kong Polytechnic University in 2017 and 2018, respectively.
	From 2018 to 2021, he was an Assistant Professor with the College of Intelligence and Computing, Tianjin University.
	He is currently an Associate Professor with the School of Computer Science and Technology, Harbin Institute of Technology. 
	His research interests include computer vision and deep learning.
\end{IEEEbiography}
\vspace{-5mm}
\begin{IEEEbiography}[{\includegraphics[width=1in,height=1.25in,clip,keepaspectratio]{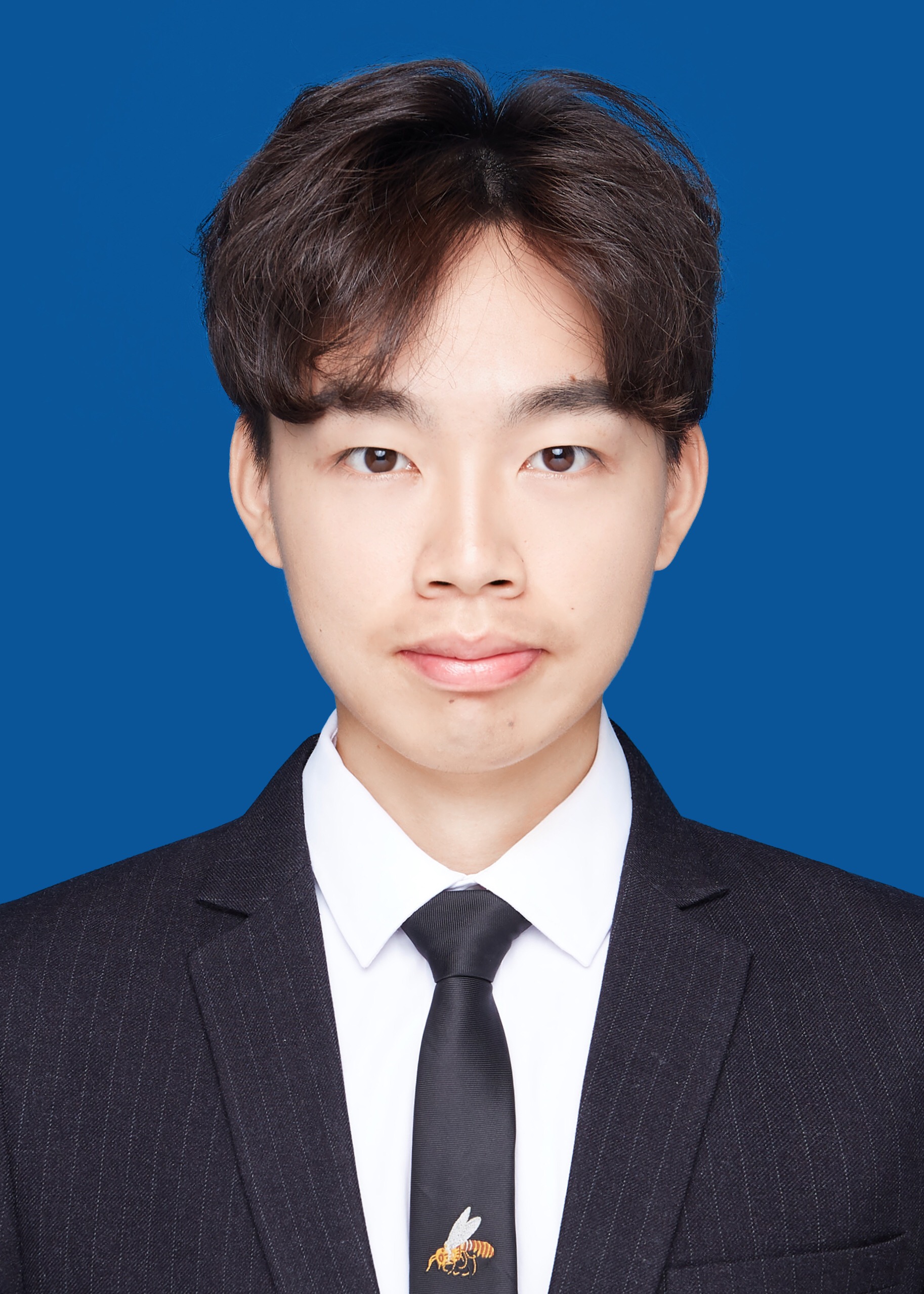}}]{Wei Liu}
    received the B.S and M.S. degrees in computer science from the School of Computer Science and Technology, Tianjin University, China, in 2017 and 2020.
    He is now working at NR Electric Co., Ltd, Nanjing, China.
    His research interests include multimodal computing and computer vision.
\end{IEEEbiography}
\vspace{-5mm}
\begin{IEEEbiography}[{\includegraphics[width=1in,height=1.25in,clip,keepaspectratio]{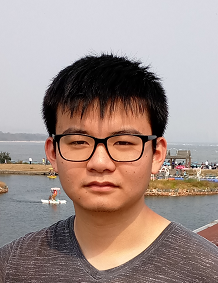}}]{Rongguang Ye}
    received the B.S degree from the School of Mathematics, Tianjin University, Tianjin, China, in 2019.
    He is now pursuing a M.S degree of computational mathematics in Tianjin University.
    His research interests include object detection and computer vision.
\end{IEEEbiography}
\vspace{-5mm}
\begin{IEEEbiography}[{\includegraphics[width=1in,height=1.25in,clip,keepaspectratio]{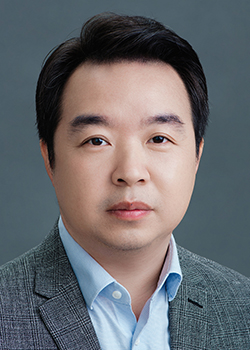}}]{Qinghua Hu}
	received the B.S., M.S., and Ph.D. degrees from the Harbin Institute of Technology, Harbin, China, in 1999, 2002, and 2008, respectively. He was a Post-Doctoral Fellow with the Department of Computing, Hong Kong Polytechnic University, from 2009 to 2011. He is currently the Dean of the School of Artificial Intelligence, the Vice Chairman of the Tianjin Branch of China Computer Federation, the Vice Director of the SIG Granular Computing and Knowledge Discovery, and the Chinese Association of Artificial Intelligence. He is currently supported by the Key Program, National Natural Science Foundation of China. He has published over 200 peer-reviewed papers. His current research is focused on uncertainty modeling in big data, machine learning with multi-modality data, intelligent unmanned systems. He is an Associate Editor of the IEEE TRANSACTIONS ON FUZZY SYSTEMS, Acta Automatica Sinica, and Energies.
\end{IEEEbiography}
\vspace{-5mm}
\begin{IEEEbiography}[{\includegraphics[width=1in,height=1.25in,clip,keepaspectratio]{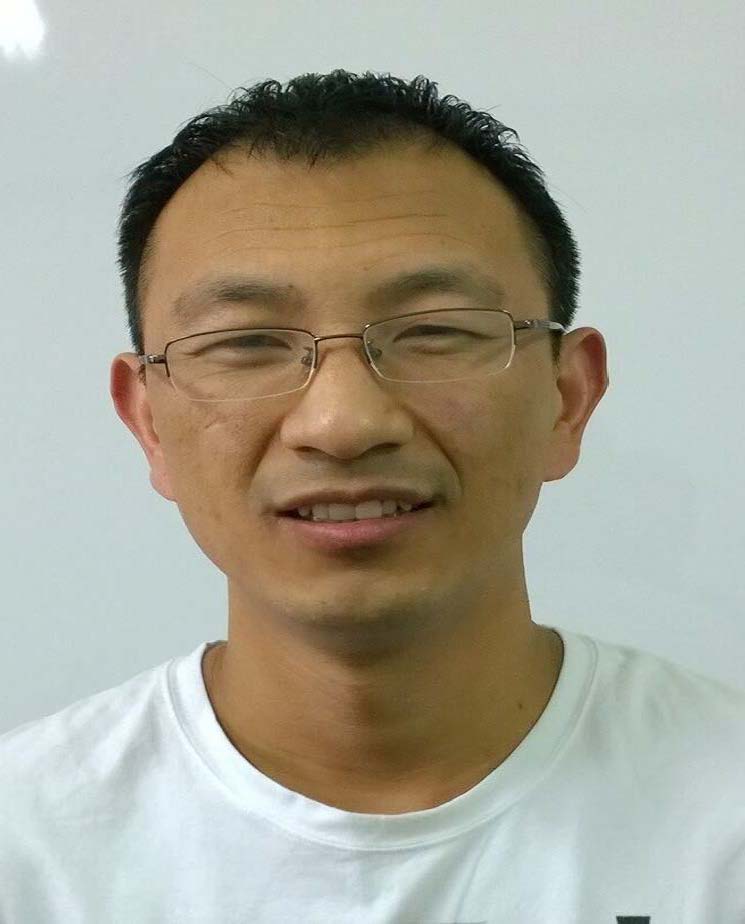}}]{Wangmeng Zuo} (M'09-SM'14)
	received the Ph.D. degree in computer application technology from the Harbin Institute of Technology, Harbin, China, in 2007.
	He is currently a Professor in the School of Computer Science and Technology, Harbin Institute of Technology. His current research interests include image enhancement and restoration, image and face editing, object detection, visual tracking, and image classification. He has published over 100 papers in top tier academic journals and conferences. According to the statistics by Google scholar, his publications have
	been cited more than 20,000 times in literature. He has served as an Associate Editor of the \emph{IEEE Transactions on Pattern Analysis and Machine Intelligence} and \emph{IEEE Transactions on Image Processing}.
	\end{IEEEbiography}

\end{document}